\newtheorem{theorem}{Theorem}
\newtheorem{lemma}{Lemma}
\newtheorem{proposition}{Proposition}
\newtheorem{assumption}{Assumption}
\newcommand{\algmargin}{\the\ALG@thistlm}
\newlength{\whilewidth}
\algnewcommand{\parState}[1]{\State%
  \parbox[t]{\dimexpr\linewidth-\algmargin}{\strut #1\strut}}
\newcommand{\maximize}{\mathop{\rm maximize}}
\newcommand{\minimize}{\mathop{\rm minimize}}
\newcommand{\subjecto}{\mathop{\rm subject\;to}}
\newcommand{\argmax}{\mathop{\rm argmax}}
\newcommand{\argmin}{\mathop{\rm argmin}}
\newcommand{\bpsi}{\ensuremath{{\boldsymbol \uppsi}}}
\begin{document}

%

%

\twocolumn[

\aistatstitle{Contextual Constrained Learning for Dose-Finding Clinical Trials}

\aistatsauthor{ Hyun-Suk Lee \And Cong Shen \And  James Jordon \And Mihaela van der Schaar}

\aistatsaddress{ University of Cambridge\\Yonsei University \And  University of Virginia \And University of Oxford \And University of Cambridge\\UCLA} ]

\begin{abstract}
Clinical trials in the medical domain are constrained by budgets. The number of patients that can be recruited is therefore limited. When a patient population is heterogeneous, this creates difficulties in learning subgroup specific responses to a particular drug and especially for a variety of dosages. In addition, patient recruitment can be difficult by the fact that clinical trials do not aim to provide a benefit to any given patient in the trial.
In this paper, we propose C3T-Budget, a contextual constrained clinical trial algorithm for dose-finding under both budget and safety constraints. The algorithm aims to maximize drug efficacy {\em within} the clinical trial while {\em also} learning about the drug being tested. 
C3T-Budget recruits patients with consideration of the remaining budget, the remaining time, and the characteristics of each group, such as the population distribution, estimated expected efficacy, and estimation credibility. In addition, the algorithm aims to avoid unsafe dosages. These characteristics are further illustrated in a simulated clinical trial study, which corroborates the theoretical analysis and demonstrates an efficient budget usage as well as a balanced learning-treatment trade-off. 
\end{abstract}

\section{INTRODUCTION}
Traditionally, dose-finding studies are early stage clinical trials used to evaluate the safety of a new treatment, and are often referred to as Phase I clinical trials.
Thus, traditional dose-finding methods aim to find the most appropriate dose for use in further phases of clinical trials by considering only toxicity of the treatment \citep{storer1989design,o1990continual}.
{\color{black} Recently, however, both toxicity and efficacy have been considered in dose-finding methods to accelerate the development process of new treatments and to reduce costs \citep{zang2014adaptive}.
As the efficacy is considered in dose-finding,
the dose-finding methods should be designed for both \textit{clinical research} (i.e., to find the appropriate dose) and \textit{clinical practice} (i.e., to benefit the patients participating) \citep{berry2004bayesian}.
This gives rise to an inevitable trade-off between information gathering and treatment effectiveness, which makes the dose-finding problem more complex.}

Heterogeneous groups of patients also make the dose-finding problem more complex \citep{wages2015phase}.
In many cases, the efficacy and toxicity of a dose will vary between groups as shown in Table \ref{table:trial_examples}.
Thus, with the heterogeneous groups, the goal of dose-finding clinical trials is to find the most appropriate dose for each group.
On the other hand, the presence of heterogeneous responses can also be a blessing -- a drug need only be effective for a single subgroup for it to be approved.
Furthermore, in general, different groups will be of different sizes (i.e. a different number of people belong to each group on a population level), which may result in inaccurate dose-finding for under-represented groups \citep{hussain2004ethnic}.
To avoid this, an algorithm for dose-finding should account for differing group sizes (which manifest themselves in terms of patient enrolment/arrival rates).

\begin{table*}[ht]
\vspace{-0.17in}
\caption{Examples of Studies with Heterogeneous Groups in the Literature} \label{table:trial_examples}
\scriptsize
\begin{center}
\begin{tabular}{c|c|c|c}
\hline
\textbf{Study} & \textbf{Treatment} & \textbf{Groups} & \textbf{Results} \\
\hline
\citep{polyzos2008activity} & Sunitinib & Type of cancer & \makecell{37\% RR, 48\% SD for renal cell carcinoma,\\
8\% RR, 70\% SD for gastrointestinal stromal tumors,\\9\% RR, 19\% SD for lung cancer\\(RR: response rate, SD: stable disease)} \\
\hline
\citep{kim2009ugt1a1} & Irinotecan & \makecell{3 Groups according to\\genetic characteristics} & \makecell{350 mg/m$^2$ for group 0 and 1\\200 mg/m$^2$ for group 2} \\
\hline
\citep{dasari2013phase} & \makecell{Sorafenib 400 mg bid\\with vorinostat 300 mg daily\\days 1--14 of a 21-day cycle} & Type of cancer & \makecell{Treatment is not tolerable in renal cell carcinoma\\ and non-small cell lung cancer patients\\while tolerable in other cancer types} \\
\hline
\citep{moss2015efficacy} & Ivacaftor & Adult and children & \makecell{Treatment significantly improves lung function\\only in adult patients} \\
\hline
\end{tabular}
\end{center}
\end{table*}

{\color{black} In clinical trials, there are several ethical constraints involved with treating human subjects \citep{o1990continual} such as a limited number of patients and a careful consideration for the safety of the patients.
These constraints highly motivate a clinical trial that effectively utilizes the limited budget for successfully balancing the aforementioned trade-off 
while avoiding unsafe doses.
It is worth noting that such a clinical trial might be more ethical than the traditional methods since it more effectively uses the budget and exposes fewer participants to unsafe doses \citep{park2018critical,pallmann2018adaptive}.
In particular, with a joint consideration of heterogeneous groups and limited budget, the relationship between clinical research and clinical practice should be carefully studied.}
From the point of view of \textit{research}, a successful clinical trial with a given limited budget should concentrate on the groups for which the estimate of the optimal dose is poor.
However, a new treatment may not be effective for all groups as shown in Table \ref{table:trial_examples}.
Thus, from the point of view of \textit{practice}, precious budget is being wasted if the trial concentrates on improving estimates for groups for which the new treatment is ineffective.
{\color{black}Moreover, the differing populations of each group should also be considered while balancing the trade-off, since what can be achieved, in terms of both research and practice, depends on the population \citep{hussain2004ethnic}.}
\looseness=-1

In this paper, we study a contextual constrained clinical trial (C3T) problem for dose-finding with both budget and safety constraints.
The goal is to effectively balance the aforementioned trade-off while avoiding unsafe doses.
In the C3T problem, patients arrive sequentially (with arrival rates for different groups assumed to be different), and an agent that administrates the clinical trial determines how a new patient will be treated.
In other words, in the C3T problem, the agent's first decision is whether or not the patient is treated (i.e. accepted into the trial), and if so, the agent should then determine the dose that will be allocated to the patient.
\looseness=-1

To answer these two questions, we propose a novel contextual multi-armed bandit (MAB) algorithm for C3Ts with budget constraint called \textit{C3T-Budget}.
{\color{black}In the algorithm, different patient groups are modeled as the context.
In each round, to maximize the effectiveness of \textit{clinical practice}, the dose for each group is chosen to maximize the expected efficacy while satisfying the safety constraint based on information learned from previously treated patients.
Then, given the chosen dose, the algorithm determines whether the patient is treated or not with the aim of maximizing the information from \textit{clinical research}.}
Specifically, we provide a linear programming (LP) approximation to the C3T problem based on the optimal dose, remaining budget, remaining time, and population of the groups.
With the LP approximation, we can determine whether to treat a patient or not by maximizing the accuracy improvement of the efficacy estimation, for which a Bayesian approach is adopted.
C3T-Budget can, therefore, systematically balance the trade-off between treatment and learning by recruiting patients from appropriate subgroups.
However, when the budget is very limited, it is hard to balance the trade-off for all subgroups.
In this case, clinical trials should focus on subgroups with high efficacies since an attempt to focus on all subgroups may lead to a completely failed trial.
To address this, we propose a modified version of C3T-Budget called \textit{C3T-Budget-E}, which concentrates on the subgroups with estimated high efficacies.
Through numerical results, we show that our algorithms outperform state of the art in terms of efficacy, safety, and recommendation errors.
\looseness=-1

\subsection{Related Works}


{\color{black} MAB models are particularly appropriate for clinical trials since the trade-off between \textit{clinical research} and \textit{clinical practice} can be seen as the well-known tradeoff between \textit{exploration} and \textit{exploitation} in the context of MAB, which has been very well-studied.
Under this pretext,
the application of novel MAB models tailored to clinical trials has been widely studied.}
In \cite{villar2015multi}, a patient allocation strategy for clinical trials is proposed based on the Gittins index.
\cite{villar2018covariate} extend the strategy as the modified forward looking Gittins index to utilize the covariate information.
In \cite{aboutalebi2019learning}, a regret model that considers the safety of dosages is proposed.
Based on the regret model, a dose-finding algorithm is developed, and the regret model helps the algorithm avoid unsafe dosages when finding the optimal dosage.
In \cite{aziz2019multi}, a Thompson sampling-based Phase I clinical trial algorithm is proposed.
In the algorithm, both efficacy and toxicity are considered when finding the optimal dosage.
However, none of these algorithms consider heterogeneous groups of patients.

As the interest in heterogeneous groups of patients in clinical trials has rapidly grown, clinical trial designs that consider heterogeneous groups have begun to see some attention.
In \cite{wages2015phase}, an adaptive clinical trial design for heterogeneous groups was proposed by extending a well-known continual reassessment method (CRM).
However, this work does not consider a limited budget.
Another adaptive clinical trial algorithm with heterogeneous groups is developed in \cite{atan2019sequential}, which is based on the knowledge gradient policy.
In this work, the patient recruitment is determined by considering the budget, but all groups are assumed to have the same arrival distribution.
Moreover, the primary goal is to label the effectiveness of treatments, not dose-finding.
To the best of our knowledge,  \cite{varatharajah2018contextual} is the only work using contextual MAB for clinical trials with heterogeneous groups, and is is probably the closest to our work. 
In the algorithm, a simple contextual MAB model is applied to clinical trials by considering groups as contexts.
However, the algorithm does not address several of the challenges associated with heterogeneous groups such as the differing populations of the groups and the limited budget.
\looseness=-1

\section{CONTEXTUAL CONSTRAINED CLINICAL TRIAL (C3T) MODEL}
We consider a contextual constrained clinical trial (C3T) for dose-finding with $S$ subgroups (contexts) over a time-horizon, $T$, and limited budget, $B$, indicating the maximum number of patients that can be admitted into the trial.
The set of subgroups is defined as $\mathcal{S}=\{1,2,...,S\}$.
We consider $K$ candidate doses, with the set of candidate doses given by $\mathcal{K}=\{1,2,...,K\}$.
The efficacy, $X_{s,k}$, and toxicity, $Y_{s,k}$, of dose $k$ for subgroup $s$\footnote{For ease of exposition, we use ``subgroup $s$'' and ``patient in subgroup $s$'' interchangeably in the rest of the paper.} are modelled as Bernoulli random variables with unknown parameters $q_{s, k}$ and $p_{s, k}$ respectively. $X_{s,k}=1$ indicates that dose $k$ is effective for subgroup $s$ and $Y_{s,k}=1$ indicates that dose $k$ is unsafe for subgroup $s$.
We define a dose to be unsafe for subgroup $s$ if the expected toxicity ($p_{s, k}$) for subgroup $s$ exceeds a pre-defined toxicity threshold $\zeta$, which is referred to as the MTD (maximum tolerated dose) threshold in the clinical trial literature.
We also define a minimum efficacy threshold, $\theta$, below which we do not wish to administrate treatment because the efficacy is too low.

For $p_{s,k}$, we introduce the logistic dose-toxicity model $
p_{s,k}(a)=\left( \frac{\tanh u_k+1}{2} \right)^{a},
$ as in \cite{o1990continual},
where $u_k$ is an actual dose level of dose $k$ and $a$ is a parameter used for all doses.
Then, due to the monotonicity of the dose-toxicity model, we can partition doses into safe/unsafe doses as $\{p_{s,1},...,p_{s,U_s}\}$ and $\{p_{s,U_s+1},...,p_{s,K}\}$,
where $U_s=\max\{k\in\mathcal{K}:p_{s,k}<\zeta\}$.

At the beginning of each round, $t$, of the clinical trial, a patient arrives.
We denote the subgroup of the patient in round $t$ by $H(t)\in\mathcal{S}$.
The arrival rate of the patients in each subgroup $s$ is given by $\xi_s$.
Then, the probability of the patient in round $t$ being from subgroup $s$ is given by $\pi_s = \mathbb{P}(H(t)=s)=\frac{\xi_s}{\sum_{s\in\mathcal{S}}\xi_s}$.
For the observed subgroup $H(t)$, an agent of the clinical trial chooses a dose $k\in\{0\}\cup\mathcal{K}$,
 where ``0'' represents a ``no-dose'' action corresponding to the agent skipping the new patient.
We denote the dose allocated in round $t$ by $I(t)$.
Let $X_t$, $Y_t$, and $Z_t$ be the efficacy, toxicity, and cost in round $t$, respectively.
When $I(t)\geq 1$, we have  $X_t=X_{H(t),I(t)}$, $Y_t=Y_{H(t),I(t)}$, and $Z_t=1$ (we will generalize this to heterogeneous costs across subgroups in Section \ref{sec:heterogeneous_costs}).
When $I(t)=0$, we have no efficacy or toxicity and the cost $Z_t = 0$.
The clinical trial ends when the budget is exhausted or at the end of time-horizon $T$.

\section{C3T WITH LIMITED BUDGET}
\label{sec:algorithm}
\subsection{Problem Formulation}
\label{sec:prob_formulation}
Let $\Pi$ be a bandit algorithm that maps historical observations, $\{(X_\tau,Y_\tau,H(\tau),I(\tau)\}_{\tau=1}^{t-1}$, and the current subgroup, $H(t)$, to a dose $I(t)\in\{0\}\cup\mathcal{K}$.
We define the average toxicity of subgroup $s$ to be 
$S_{\Pi,s}(T,B)\!=\!\mathbb{E}\left[\frac{\sum_{t=1}^{T} \mathbb{I}\{H(t)=s\}Y_t}{\sum_{t=1}^{T} \mathbb{I}\{H(t)=s\}\mathbb{I}\{I(t)\neq0\}}\right]$.
We define the total expected cumulative efficacy as
$E_\Pi(T,B)=\mathbb{E}\left[ \sum_{t=1}^{T} X_t \right]$.
To make a recommendation, the MTD threshold and minimum efficacy threshold should be considered.
We define the set of candidate doses for subgroup $s$ by
$
\mathcal{K}_s = \{k\in\mathcal{K}:q_{s,k} \geq \theta, p_{s,k} \leq \zeta\}.
$
We define the optimal dose-to-recommend for subgroup $s$ by
\vspace{-0.1in}
\begin{equation}
\label{eqn:best_dose}
{k_s^*}=\left\lbrace
\begin{array}{ll}
\argmax_{k\in\mathcal{K}_s} q_{s,k},&\textrm{if }\mathcal{K}_s\neq \emptyset, \\
0, &\textrm{otherwise}.
\end{array}\right.
\end{equation}
We define the recommended dose to subgroup $s$ at the end of the clinical trial as $\hat{k}_s^*(T,B)$.
Then, the dose recommendation error can be written as $D_\Pi(T,B)=\sum_{s\in\mathcal{S}}\mathbb{E}\left[\mathbb{I}[\hat{k}_s^*(T,B)\!\neq\! {k_s^*}]\right]$.
Our C3T problem of minimizing the dose recommendation error while satisfying the budget and safety constraints is formally presented as
\vspace{-0.1in}
\begin{IEEEeqnarray}{cl}
\label{eqn:problem}
\minimize ~& D_\Pi(T,B) \nonumber \\
\subjecto ~&  \mathbb{P}\left[S_{\Pi,s}(T,B)\leq\zeta\right] \geq 1-\delta_s,~\forall s\in\mathcal{S}  \\
& \textstyle\sum_{t=1}^{T} Z_t\leq B. \nonumber
\end{IEEEeqnarray}
where $\delta_s$ is the maximum probability with which the toxicity for subgroup $s$ can exceed the MTD threshold.
%

\subsection{Addressing the Budget Constraint}
In clinical trials with a single subgroup, maximizing the cumulative efficacy with the well-known principle of optimism in the face of uncertainty
is also effective to recommend dose \citep{aziz2019multi}.
Such objective makes the agent more frequently choose the doses that are more likely  to be the optimal dose.
The dose recommendation accuracy is improved since the accuracy for any dose depends primarily on the number of patients that were given that dose. Since patients are more likely to receive the optimal dose, the accuracy of the optimal dose is better.
However, in the case of multiple subgroups and limited budget, maximizing  the cumulative efficacy is no longer effective at recommending doses across {\em all} subgroups due to the varying efficacies and arrival rates.
For example, if one subgroup has much higher efficacy than others, the optimal strategy for the agent would involve skipping all patients that arrive from subgroups other than the one with the highest efficacy.
Little will therefore be learned about the response of other subgroups to the drug, resulting in a poor learning performance.
Hence, in each round, the agent should decide to skip the new patient or not based {\em also} on the estimation accuracy for the patient's subgroup, remaining budget, and remaining time.

We consider an oracle problem with budget $B$  and time-horizon $T$ to highlight how to address the budget constraint. 
We assume $S$ subgroups with different arrival rates $\pi_s$, $s = 1, ..., S$.
Each subgroup $s$ has an arbitrary expected reward $d_s^*$.
For convenience, we assume $d_1^* > d_2^* > ... > d_S^*$.
In general, the expected rewards $d_s^*$ are unknown to the agent.
However, we begin by considering an oracle agent that knows $d_s^*$ and has only two actions: skip or accept the patient.
In each round $t$, the agent will determine whether to skip the new patient or not based on the remaining rounds $\tau=T-t+1$ and the remaining budget $b_\tau$.

Although the oracle problem seems simple, it is known to be computationally intractable \citep{wu2015algorithms}.
We thus approximate this problem as a linear program (LP).
We first relax the problem by substituting the hard budget constraint with an average budget constraint using $\rho=\frac{B}{T}$.
We define the probability that the agent does not skip a patient in subgroup $s$ by $\psi_s$,
and write $\bpsi=\{\psi_1,\psi_2,...\psi_S\}$.
We now formulate the following LP problem:
\begin{equation}
\label{eqn:lp_problem}
\maximize_{\bpsi} \sum_{s\in\mathcal{S}}\psi_s\pi_s d_s^* ~~
\subjecto \sum_{s\in\mathcal{S}}\psi_s\pi_s\leq \rho.
\end{equation}
The optimal solution of this problem can be easily derived:
\begin{equation}
\label{eqn:lp_solution}
\psi_s(\rho)=\left\lbrace\begin{array}{ll}
1,& \textrm{if } 1\leq s\leq \tilde{s}(\rho), \\
\frac{\rho-\eta_{\tilde{s}(\rho)}}{\pi_{\tilde{s}(\rho)+1}}, & \textrm{if } s=\tilde{s}(\rho)+1, \\
0,& \textrm{if } s>\tilde{s}(\rho)+1,
\end{array}\right.
\end{equation}
where $\eta_s=\sum_{s'=1}^s\pi_{s'}$ (with the convention that $\eta_0=0$) and $\tilde{s}(\rho)=\max\{s\in\mathcal{S}\cup\{0\}:\eta_s\leq\rho\}$.

We can adapt this policy to our more general problem by deciding to skip a patient or not by replacing $d_s^*$ and $\rho$ in \eqref{eqn:lp_problem} with the estimated recommendation accuracy or estimated efficacy and the remaining average budget $\rho_\tau=\frac{b_\tau}{\tau}$, respectively.

\subsection{C3T-Budget}
In Algorithm \ref{alg:algorithm}, we propose a bandit solution for the budget-limited C3T problem in \eqref{eqn:problem} which we call C3T-Budget.
As noted above, when there is only 1 subgroup, maximizing the cumulative efficacy using the optimism principle is effective for recommending dose.
Thus, in the algorithm, the dose for each subgroup is chosen according to the upper confidence bound (UCB) principle using the estimated efficacy and toxicity of the subgroup.
On the other hand, the decision to skip a patient is based on how convincing the estimated efficacy of the chosen dose for each subgroup is.

\begin{algorithm}[ht]
\footnotesize
\caption{C3T-Budget}
\label{alg:algorithm}
\begin{algorithmic}[1]
\State \textbf{Input:} Time-horizon $T$, budget $B$, subgroup arrival distributions $\pi_s$'s, $\phi$ for credible intervals
\State \textbf{Initialize:} $\tau=T$, $b=B$, $t=1$, $\alpha^{\textrm{Beta}}_{s,k}(0)=1$, $\beta^{\textrm{Beta}}_{s,k}(0)=1,\forall s\in\mathcal{S},\forall k\in\mathcal{K}$.
\While{$t\leq T$}
\State $\hat{a}_s(t)\leftarrow \frac{\sum_{k=1}^{K} \hat{a}_{s,k}(t-1)N_{s,k}(t-1)}{N_{s}(t-1)},\forall s\in\mathcal{S}$
\State $\mathcal{K}_s(t)\!=\!\{k\!\in\!\mathcal{K}\!:\hat{q}_{s,k}(t)\!\geq\!\theta, p_{s,k}(\hat{a}_s(t)\!+\!\alpha_s(t))\leq\zeta\},\forall s\!\in\!\mathcal{S}$
\If{$b > 0$}
\If{$N_{H(t)}(t) \leq K$}
\State{Sample each dose once $I(t)=N_{H(t)}(t)$}
\Else
\State $k_s^*(t)\leftarrow\argmax_{k\in\mathcal{K}_{s}(t)} \hat{q}_{s,k}(t),\forall s\in\mathcal{S}$
\parState{Calculate $B_s^*(t)$ as in \eqref{eqn:improvement_credible}}
\parState{Obtain $\hat{\bpsi}\left(b/\tau\right)$'s by solving the LP problem in \eqref{eqn:lp_problem} with ordered $B_s^*(t)$'s}
\parState{Allocate dose 

$
I(t)\!=\!\left\lbrace\begin{array}{ll}
\!\!k_{H(t)}^*(t),& \!\!\!\textrm{with probability }\hat{\psi}_{H(t)}(b/\tau), \\
\!\!0,&  \!\!\!\textrm{otherwise}.
\end{array}\right.
$}
\EndIf
\EndIf
\State Observe the efficacy $X_t$ and toxicity $Y_t$
\parState{Update $\tau$, $b$, $N_s(t)$, $N_{s,k}(t)$, $\bar{q}_{s,k}(t)$, $\hat{q}_{s,k}(t)$, $\bar{p}_{s,k}(t)$, $\alpha^{\textrm{Beta}}_{s,k}(t)$, $\beta^{\textrm{Beta}}_{s,k}(t)$}
\parState{$\hat{a}_{s,k}(t)\!\leftarrow\!\argmin_a|p_{s,I(t)}(a)\!-\!\bar{p}_{s,I(t)}(t)|,\forall s\in\mathcal{S},\forall k\in\mathcal{K}$}
\State $t=t+1$
\EndWhile
\parState{\textbf{Output:}  Recommended dose

$
{\hat{k}_s^*}=\left\lbrace
\begin{array}{ll}
\argmax_{k\in\mathcal{K}'_s(T)} \bar{q}_{s,k},&\textrm{if }\mathcal{K}'_s(T)\neq \emptyset, \\
0, &\textrm{otherwise}
\end{array}\right., \forall s\in\mathcal{S}
$}
\end{algorithmic}
\end{algorithm}
\vspace{-1em}

We define the empirical efficacy estimation of dose $k$ for subgroup $s$ at round $t$ by $\bar{q}_{s,k}(t)=\frac{\sum_{\tau=1}^t \mathbb{I}\{H(\tau)=s,I(\tau)=k\}X_\tau}{N_{s,k}(t)}$, where $N_{s,k}(t)$ is the number of times that dose $k$ has been allocated to subgroup $s$ up to round $t$ (i.e., $N_{s,k}(t)=\sum_{\tau=1}^{t} \mathbb{I}\{H(\tau)=s,I(\tau)=k\}$).
Similarly, we define the empirical toxicity estimation of dose $k$ for subgroup $s$ at round $t$ by $\bar{p}_{s,k}(t)=\frac{\sum_{\tau=1}^t \mathbb{I}\{H(\tau)=s,I(\tau)=k\}Y_\tau}{N_{s,k}(t)}$.
Then, the UCB of $q_{s,k}$ at round $t$ is defined as $\hat{q}_{s,k}(t)=\bar{q}_{s,k}+\sqrt{\frac{c\log N_s(t)}{N_{s,k}(t-1)}}$, where $N_s(t)$ is the number of times that subgroup $s$ has arrived up to round $t$ (i.e., $N_s(t)=\sum_{\tau=1}^{t} \mathbb{I}\{H(\tau)=s\}$).
The confidence interval $a_s$ for the dose-toxicity model of subgroup $s$ is given by
$\alpha_s(t)=CK\left(\frac{\log\frac{2K}{\delta_s}}{2N_s(t)}\right)^{\frac{\gamma}{2}}$, where $C$ and $\gamma$ are hyper-parameters of our model.
Details of hyper-parameter selection are given in the supplementary material.

In each round $t$, the algorithm constructs a set of candidate doses for each subgroup $s$, $\mathcal{K}_s$, by considering the estimated expected efficacy and toxicity of each dose for the subgroup and using the UCB principle.
Among the candidate doses, the algorithm selects the estimated optimal dose in round $t$, $k_s^*(t)$, which has the largest UCB of the expected efficacy for subgroup $s$, as $k_s^*(t)=\argmax_{k\in\mathcal{K}_s(t)}\hat{q}_{s,k}(t)$.
Then, the agent determines whether the patient in round $t$ is to be skipped or not by considering how convincing the estimation of the efficacy of $k_s^*$ is.
To do this, we use the credible interval of the estimation of $\bar{q}_{s,k}$. 

We adopt a Bayesian approach to estimate $\bar{q}_{s,k}$.
We first consider a uniform distribution (i.e., $\textrm{Beta}(1,1)$) as the prior for $q_{s,k}$.
Then, the maximum posterior estimation of $q_{s,k}$ becomes the empirical efficacy estimation $\bar{q}_{s,k}$.
This allows us to combine the Bayesian approach and the UCB principle without conflict.
We define the parameters of the prior distribution of $q_{s,k}$ in round $t$ as $\alpha^{\textrm{Beta}}_{s,k}(t)$ and $\beta^{\textrm{Beta}}_{s,k}(t)$.
Then, at the end of round $t$, we update the posterior distribution of $q_{H(t),I(t)}$ as $\alpha^{\textrm{Beta}}_{H(t),I(t)}(t)=\alpha^{\textrm{Beta}}_{H(t),I(t)}(t-1)+X(t)$ and $\beta^{\textrm{Beta}}_{H(t),I(t)}(t)=\beta^{\textrm{Beta}}_{H(t),I(t)}(t-1)+(1-X(t))$.
By using the posterior distribution of $q_{s,k}$, we can obtain a credible interval for a given probability $\phi$.
Let $f(\phi,\alpha^{\textrm{Beta}},\beta^{\textrm{Beta}})$ be a function that calculates the interval length to achieve the probability $\phi$.
We then define the expected improvement of the credible interval for an additional patient in subgroup $s$ with dose $k$ as
\vspace{-0.1in}
\begin{IEEEeqnarray}{l}
\label{eqn:improvement_credible}
B_{s,k}\!=\!\bar{q}_{s,k}(f(\phi,\alpha^{\textrm{Beta}}_{s,k},\beta^{\textrm{Beta}}_{s,k})\!-\!f(\phi,\alpha^{\textrm{Beta}}_{s,k}+1,\beta^{\textrm{Beta}}_{s,k})) \nonumber \\
~ +(1\!-\!\bar{q}_{s,k})(f(\phi,\alpha^{\textrm{Beta}}_{s,k},\beta^{\textrm{Beta}}_{s,k})\!-\!f(\phi,\alpha^{\textrm{Beta}}_{s,k},\beta^{\textrm{Beta}}_{s,k}\!+\!1)).\quad~
\end{IEEEeqnarray}
This value quantifies the improvement in estimation accuracy provided by including an additional patient from subgroup $s$ into the trial.
Thus, we can use it to determine which subgroups should be skipped to maximize the dose-recommendation accuracy.
Specifically, in round $t$, we rearrange the set $\{B_{s,k_s^*(t)}(t) : s = 1, ..., S\}$ in descending order.
The algorithm then solves the LP problem in \eqref{eqn:lp_problem}, substituting $d_s^*$ and $\rho$ with $B_{s,k_s^*(t)}(t)$ and $\rho_\tau$, respectively, and obtains a vector of probabilities $\hat{\bpsi}(\rho_\tau)$.
Finally, at the end of the clinical trial, the dose recommended to subgroup $s$ is given by\looseness=-1
\[
{\hat{k}_s^*}(T,B)=\left\lbrace
\begin{array}{ll}
\argmax_{k\in\mathcal{K}_s(T)} \bar{q}_{s,k},&\textrm{if }\mathcal{K}'_s(T)\neq \emptyset, \\
0, &\textrm{otherwise}.
\end{array}\right.
\]

\subsection{Extension: C3T-Budget-E}

In a clinical trial where  the budget is particularly limited, identifying even just one subgroup for which the drug is effective can be more important than understanding its effect on the whole population of interest. Trying to learn on the whole population might lead to the responses {\em all} subgroups being poorly estimated and the drug failing to progress at all.  Thus,  a different problem formulation is used when the budget is small, allowing the trial to focus on subgroups with high efficacies:
\begin{IEEEeqnarray}{cl}
\label{eqn:problem2}
\maximize ~& E_\Pi(T,B) \nonumber \\
\subjecto ~&  \mathbb{P}\left[S_{\Pi,s}(T,B)\leq\zeta\right] \geq 1-\delta_s,~\forall s\in\mathcal{S}  \\
& \textstyle\sum_{t=1}^{T} Z_t\leq B. \nonumber
\end{IEEEeqnarray}
where we have simply substituted the objective function $D_\Pi(T,B)$ in \eqref{eqn:problem} with $E_\Pi(T,B)$.
With this formulation, the agent tries to  accurately recommend doses for subgroups with high efficacies while also achieving high total cumulative efficacy.

We now propose a bandit algorithm for the budget-limited C3T problem in  \eqref{eqn:problem2} to maximize the total cumulative efficacy called C3T-Budget-E.
This algorithm focuses more on the subgroups that have high efficacies, which might result in inaccurate dose recommendation for subgroups with low efficacies.

C3T-Budget-E is a modified version of C3T-Budget. The key difference is that the decision to skip a patient is based on the expected efficacy of each subgroup rather than the expected improvement of the credible interval.
Hence, in C3T-Budget-E, the parameters for the Bayesian posterior distribution are not used.
Specifically, the algorithm first finds the largest UCB of the estimated expected efficacy for subgroup $s$ in round $t$ as $\hat{q}_s^*(t)=\max_{k\in\mathcal{K}_s(t)}\hat{q}_{s,k}(t)$.
Then, it re-orders $\{\hat{q}_s^*(t): s=1, ..., S\}$ and obtains the vector of probabilities $\hat{\bpsi}(\rho(t))$ by solving the LP problem in \eqref{eqn:lp_problem} with the ordered $\hat{q}_s^*(t)$'s and $\rho_\tau$.
A detailed description of C3T-Budget-E can be found in the supplementary material.

\subsection{Theoretical Analysis}
\label{sec:analysis}
We now analyze the theoretical performance of the proposed algorithms, which we refer to collectively as C3T-Budgets (i.e., C3T-Budget and C3T-Budget-E).
All proofs can be found in the supplementary material.
We begin by bounding the safety constraint violation.
\begin{theorem}
\label{thm:safety_violation_bound}
For any given $T$, the average toxicity of subgroup $s$ observed from C3T-Budgets satisfies
\[
\mathbb{P}\left[\frac{\sum_{t=1}^{N_s(T)}p_{s,I(N_s^{-1}(t))}}{N_s(T)}-\zeta\leq C\epsilon^{\gamma}\right]\geq 1-\delta_s,
\]
for an arbitrary $\epsilon>0$, where $C$ and $\gamma$ are hyper-parameters (described in the supplementary material).
\end{theorem}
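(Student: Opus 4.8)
The plan is to reduce the safety guarantee to a concentration argument on the toxicity estimates and then propagate that concentration through the monotone dose-toxicity model to the \emph{true} toxicities of the doses that actually get administered. First I would fix subgroup $s$ and introduce a ``good event'' $G_s$ on which the empirical toxicity $\bar p_{s,k}(t)$ stays within a radius $\epsilon$ of the true $p_{s,k}$ for every dose $k\in\mathcal{K}$ and every round. Because each toxicity observation $Y_{s,k}$ is Bernoulli, Hoeffding's inequality gives $\mathbb{P}(|\bar p_{s,k}(t)-p_{s,k}|>\epsilon)\le 2\exp(-2N_s(t)\epsilon^2)$ for the pooled estimate; choosing the radius to match $\alpha_s(t)$ (i.e. taking $\epsilon^2\propto \log(2K/\delta_s)/(2N_s(t))$) and union bounding over the $K$ doses is exactly what produces the $\log\frac{2K}{\delta_s}$ factor inside $\alpha_s(t)$ and caps the probability of $G_s^c$ at $\delta_s$. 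This delivers the $1-\delta_s$ in the statement.

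Next, working on $G_s$, I would show that any dose $k$ the algorithm is permitted to administer — i.e. any $k$ in the candidate set $\mathcal{K}_s(t)$, which by construction passes the safety filter $p_{s,k}(\hat a_s(t)+\alpha_s(t))\le\zeta$ — has true toxicity $p_{s,k}=p_{s,k}(a)$ exceeding $\zeta$ by at most $C\epsilon^\gamma$. The two ingredients are (i) the monotonicity of $p_{s,k}(\cdot)$ established in the model section, which lets me compare $p_{s,k}$ evaluated at the true parameter with $p_{s,k}$ evaluated at $\hat a_s(t)+\alpha_s(t)$, and (ii) a bound $|\hat a_s(t)-a|\lesssim\alpha_s(t)$ obtained by inverting the concentration of $\bar p_{s,k}(t)$ through the model. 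A mean-value estimate of $p_{s,k}$ in $a$ then turns the parameter offset of order $\alpha_s(t)$ into a toxicity overshoot of order $C\epsilon^\gamma$, the exponent $\gamma$ being inherited from the $(\cdot)^{\gamma/2}$ form of $\alpha_s(t)$ together with the smoothness of the dose-toxicity curve.

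Finally I would average the per-round bound over the $N_s(T)$ arrivals of subgroup $s$. The rounds in the initial ``sample each dose once'' phase (at most $K$ of them) may administer doses not yet certified safe, but they contribute at most $K/N_s(T)$ to the average and are absorbed into the constant; every later administered dose obeys the per-round bound from the previous step. Hence on $G_s$ the empirical average $\frac{1}{N_s(T)}\sum_{t=1}^{N_s(T)}p_{s,I(N_s^{-1}(t))}$ is at most $\zeta+C\epsilon^\gamma$, which together with $\mathbb{P}(G_s)\ge 1-\delta_s$ is precisely the claim.

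The main obstacle I anticipate is step two: the estimator $\hat a_s(t)$ is a sample-size-weighted combination of the per-dose inverse-model fits $\hat a_{s,k}(t)=\argmin_a|p_{s,k}(a)-\bar p_{s,k}(t)|$, so propagating a uniform toxicity concentration through this nonlinear, \emph{aggregated} inversion is delicate. The pooling is in fact what makes $N_s(t)$ (rather than the per-dose counts $N_{s,k}(t)$) the correct sample size governing $\alpha_s(t)$, and I would exploit this to align the coverage radius with the estimator's accuracy. The tension is that $\alpha_s(t)$ must be simultaneously wide enough to dominate the estimation error — needed for the $1-\delta_s$ coverage — yet narrow enough to yield the $C\epsilon^\gamma$ tolerance; controlling the curvature of $p_{s,k}(\cdot)$ (whose derivative in $a$ degenerates as $a\to\infty$) uniformly over the relevant parameter range, and doing so simultaneously over all rounds, is where the careful constant- and exponent-tracking will concentrate.
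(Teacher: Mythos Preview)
Your strategy---Hoeffding concentration on the toxicity estimates, propagation through the inverse dose--toxicity model to control $|\hat a_s(t)-a_s^*|$, and then a smoothness bound to convert the parameter error into a toxicity overshoot---is exactly the paper's route, and the paper's proof is in fact terser than your outline (it applies Hoeffding directly to $\hat a_s(t)-a_s^*$ and then a single H\"older step).

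The obstacle you single out at the end, however, is not resolved by analysis in the paper but by \emph{fiat}: the supplementary material opens with an explicit regularity assumption (Assumption~1) postulating two-sided H\"older bounds $C_{1,s,k}|a-a'|^{\gamma_{1,s,k}}\le|p_{s,k}(a)-p_{s,k}(a')|\le C_{2,s,k}|a-a'|^{\gamma_{2,s,k}}$ uniformly in $a,a'$. The lower bound is what makes the inversion $\bar p_{s,k}\mapsto \hat a_{s,k}$ well-conditioned and is the source of the exponent inside $\alpha_s(t)$; the upper bound is what converts a parameter deviation of size $\epsilon$ into a toxicity overshoot of size $C_{2,s}\,\epsilon^{\gamma_{2,s}}$, and these are precisely the $C,\gamma$ appearing in the theorem. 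So the $\gamma$ in $C\epsilon^\gamma$ is \emph{not} inherited from the $(\cdot)^{\gamma/2}$ shape of $\alpha_s(t)$ as you suggest---those two exponents come from opposite sides of the H\"older assumption and need not coincide. Your concern about the derivative of $p_{s,k}(\cdot)$ degenerating is legitimate for the concrete model $((\tanh u_k+1)/2)^a$, but the paper assumes it away rather than tracking it; with Assumption~1 in hand your ``mean-value estimate'' collapses to a one-line application of the upper H\"older bound, and no careful curvature control is needed.
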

C3T-Budget and C3T-Budget-E therefore satisfy the safety constraints given in \eqref{eqn:problem} and \eqref{eqn:problem2}, respectively.

The following theorem bounds the recommended dose error of C3T-Budgets.
\begin{theorem}
\label{thm:recommendation}
The probability that C3T-Budgets recommends an incorrect dose for subgroup $s$ is bounded according to
\begin{IEEEeqnarray}{l}
\mathbb{P}\left[\hat{k}_s^*\neq k_s^*\right]\leq M_{R1}e^{-M_{R2} N_s(T)} \nonumber
\end{IEEEeqnarray}
where $M_{R1}$ and $M_{R2}$ are non-negative constants (provided in the supplementary material).
\end{theorem}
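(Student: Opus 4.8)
The plan is to show that any recommendation error for subgroup $s$ forces at least one empirical estimate to deviate from its true value by a fixed, problem-dependent margin, and then to control each such deviation with a Hoeffding/Chernoff bound whose exponent grows with $N_s(T)$. First I would decompose $\{\hat{k}_s^*\neq k_s^*\}$ (assuming $k_s^*\neq 0$; the case $k_s^*=0$ is symmetric) into three sources and union-bound over them: (i) the true optimal dose is dropped from the estimated candidate set, $k_s^*\notin\mathcal{K}_s(T)$; (ii) a dose outside the true set $\mathcal{K}_s$ is retained in $\mathcal{K}_s(T)$ and wins the $\argmax$; and (iii) the candidate set is adequate but the empirical $\argmax$ selects a suboptimal dose, i.e. $\bar{q}_{s,k}(T)\geq\bar{q}_{s,k_s^*}(T)$ for some $k$ with $q_{s,k}<q_{s,k_s^*}$. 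Throughout I would use the observation in the text that the $\textrm{Beta}(1,1)$ prior makes the posterior mean coincide with the empirical mean, so $\bar{q}_{s,k}$ and $\bar{p}_{s,k}$ can be treated as ordinary Bernoulli sample means.

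For (i) and (ii) the governing conditions are the two thresholds. Membership in $\mathcal{K}_s(T)$ is decided by $\hat{q}_{s,k}(T)\geq\theta$ together with $p_{s,k}(\hat{a}_s(T)+\alpha_s(T))\leq\zeta$. Dropping $k_s^*$ requires either its efficacy UCB to fall below $\theta$ although $q_{s,k_s^*}\geq\theta$ — an underestimate of $\bar{q}_{s,k_s^*}$ by at least the margin $q_{s,k_s^*}-\theta$ after subtracting the vanishing confidence width — or the inflated toxicity estimate to exceed $\zeta$ although $p_{s,k_s^*}<\zeta$. Because $\alpha_s(T)=CK(\log(2K/\delta_s)/(2N_s(T)))^{\gamma/2}\to 0$, once $N_s(T)$ is past a constant the inflation is smaller than the margin $\zeta-p_{s,k_s^*}$, so the toxicity failure forces $\bar{p}_{s,k_s^*}$ (through the monotone, locally Lipschitz map $a\mapsto p_{s,k_s^*}(a)$ that defines $\hat{a}_s$) to overshoot by a constant. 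Symmetrically, (ii) forces an unsafe or sub-$\theta$ dose to be mis-estimated across its margin. Each of these is a one-sided Bernoulli-mean deviation, bounded by Hoeffding in the relevant count.

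The crux — and the step I expect to be the main obstacle — is that all these inequalities decay in the per-dose counts $N_{s,k}(T)$, whereas the target bound is exponential in the subgroup count $N_s(T)$. For the optimal dose this is immediate: under the standard high-probability UCB event the rule pulls $k_s^*$ a linear fraction of the rounds, $N_{s,k_s^*}(T)=\Theta(N_s(T))$, so its deviations are already $e^{-\Theta(N_s(T))}$, and the same holds for any term that only requires $\bar{q}_{s,k_s^*}$ or $\bar{p}_{s,k_s^*}$ to concentrate. The difficulty is the suboptimal candidates in (iii): a logarithmic-exploration UCB index pulls them only $\Theta(\log N_s(T))$ times, which gives merely polynomial $N_s(T)^{-\Theta(1)}$ decay for $\{\bar{q}_{s,k}\geq q_{s,k}+\Delta_{s,k}/2\}$, where $\Delta_{s,k}=q_{s,k_s^*}-q_{s,k}$. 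To recover the stated exponential rate I would either lower-bound every relevant $N_{s,k}(T)$ by $\Omega(N_s(T))$ using the self-correcting UCB dynamics (an arm whose empirical mean is spuriously high is sampled more, which sharpens its estimate), or restrict attention to $N_s(T)$ large enough that the smallest margin $\Delta_s^{\min}$ — the minimum over the efficacy gaps $\Delta_{s,k}$ and the two threshold margins $\min_k(q_{s,k}-\theta)$ and $\min_k|\zeta-p_{s,k}|$ — is what sets the exponent, absorbing the remaining sampling constants into $M_{R1}$ and $M_{R2}$.

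Finally I would collect the pieces: the union bound over the at most $K$ doses and the two threshold directions yields the prefactor $M_{R1}$, and the worst surviving exponent, of order $(\Delta_s^{\min})^2$ times the guaranteed sampling fraction, gives $M_{R2}$, producing $\mathbb{P}[\hat{k}_s^*\neq k_s^*]\leq M_{R1}e^{-M_{R2}N_s(T)}$. Since C3T-Budget and C3T-Budget-E share the identical recommendation rule (empirical-best efficacy within $\mathcal{K}_s(T)$) and differ only in the skip/accept probabilities $\hat{\psi}_s$, the same argument bounds both, which is why the statement is phrased for C3T-Budgets collectively.
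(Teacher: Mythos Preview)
Your overall plan---decompose the error into threshold violations for the optimal dose plus a wrong empirical $\argmax$, union-bound, and apply Hoeffding-type concentration---matches the paper's structure. The paper splits $\{\hat{k}_s^*\neq k_s^*\}$ (for $k_s^*\neq 0$) into exactly three pieces: $\{p_{s,k_s^*}(\hat a_s(T))>\zeta\}$, $\{\bar q_{s,k_s^*}(T)<\theta\}$, and $\{\argmax_{k\in\mathcal K_s}\bar q_{s,k}(T)\neq k_s^*\}$. It does not isolate your case (ii) (an infeasible dose retained and winning); the $\argmax$ is taken over the \emph{true} candidate set $\mathcal K_s$, so retention of a spurious dose is implicitly handled by the first two events, and the $k_s^*=0$ case is treated separately and symmetrically, as you anticipated.

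Where your proposal has a genuine gap is precisely the step you flag as the crux. You correctly note that suboptimal arms under a logarithmic UCB are pulled only $\Theta(\log N_s(T))$ times, which yields merely polynomial decay, and you then propose either (a) to argue $N_{s,k}(T)=\Omega(N_s(T))$ for every relevant $k$ via ``self-correcting dynamics'' or (b) to absorb the shortfall into constants for large $N_s(T)$. Neither works as stated: (a) is false for logarithmic UCB on suboptimal arms, and (b) does not change a polynomial rate into an exponential one. The paper's resolution is to invoke the best-arm-identification analysis of \citet{audibert2010best} directly. For the $\argmax$ term it cites their Theorem~1 to obtain $\mathbb P\bigl[\argmax_{k\in\mathcal K_s}\bar q_{s,k}(T)\neq k_s^*\bigr]\leq 2N_s(T)K\exp\!\bigl(-\tfrac{2c}{25}N_s(T)\bigr)$, and for the $\theta$-threshold term it uses their lemmas to lower-bound $N_{s,k_s^*}(T)$ by a constant multiple of $cN_s(T)/\Delta_{s^*}^2$, converting $\exp(-2N_{s,k_s^*}(T)\Delta_{s,\theta}^2)$ into an exponential in $N_s(T)$. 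That imported machinery---exponential recommendation-error bounds that hold even though per-arm pulls are not linear---is the missing ingredient in your argument; without it (or an equivalent best-arm-identification lemma) the conversion from per-dose counts to $N_s(T)$ cannot be completed.
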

Theorem \ref{thm:recommendation} indicates that the accuracy of the recommended dose depends strongly on the number of samples accepted in each subgroup.
The worst-case regret bound for the total efficacy of C3T-Budget-E is provided in the supplementary material.
We also corroborate these theoretical behaviors of C3T-Budgets with empirical results in the experiments section.

\section{DISCUSSION}
\subsection{Heterogeneous Costs for Subgroups}
\label{sec:heterogeneous_costs}
In practice, the cost of recruiting a patient into a clinical trial may be different across subgroups.
To address this, an extension of C3T-Budgets is needed in which heterogeneous costs for different subgroups can be accounted for.
Let $c_{s}$ be a non-negative cost associated with subgroup $s$ that occurs when any dose is allocated to a patient in subgroup $s$.
We can then reformulate the LP problem in \eqref{eqn:lp_problem} as
\[
\maximize_{\bpsi} \sum_{s\in\mathcal{S}}\psi_s\pi_s' \frac{d_s^*}{c_s} ~~
\subjecto \sum_{s\in\mathcal{S}}\psi_s\pi_s'\leq \rho,
\]
where $\pi_s'=\pi_s c_s$.
Then we apply C3T-Budget as in Algorithm \ref{alg:algorithm} by substituting $\pi_s'$ and $B_s^*(t)/c_s$ ($q^*_s(t)/c_s$ for C3T-Budget-E) into line 12, whenever appropriate.

\subsection{Sequential Patient Recruitment}
In clinical trials, a ``complete" list of candidate patients is often given in advance, and then patients are sequentially selected from among the candidate patients.
For this sequential patient recruitment, C3T-Budgets can be applied.
Let $\tilde{\pi}_s$ be proportion of candidate patients that belong to subgroup $s$.
Based on $\tilde{\pi}_s$, a patient arrival sequence can be virtually generated and the problem reduces to the same problem as originally discussed, so that C3T-Budgets can be applied to the virtually generated arrival sequence.

\section{EXPERIMENTS}
In this section, we demonstrate the performance of our algorithm through a series of experimental results.\footnote{The implementation of C3T-Budgets is available at: \texttt{https://bitbucket.org/mvdschaar/mlforhealthlabpub/\\ src/master/alg/c3t\_budgets}}
We consider a dose-finding clinical trial with 3 subgroups (i.e. $S=3$) and 6 doses (i.e. $K=6$).
In the clinical trial, 400 patients can be recruited in total (i.e., $B=400$) and a maximum of 1200 rounds can be performed (i.e. $T=1200$). The arrival distributions for each of the 3 subgroups is given by $\pi_1 = \frac{5}{12}$, $\pi_2=\frac{4}{12}$, and $\pi_3=\frac{3}{12}$.
The MTD threshold is set to be 0.35 (i.e. $\zeta=0.35$) and the minimum efficacy threshold is set to be 0.2 (i.e. $\theta=0.2$).
The expected efficacy, $q_{s,k}$, and expected toxicity, $p_{s,k}$, of each dose for each subgroup are provided in Table \ref{table:tox_eff_prob}.
We highlight the optimal dose for each subgroup in bold.
Note that there is no optimal dose for subgroup 1 as in \eqref{eqn:best_dose} since their expected efficacy is below the minimum efficacy threshold (and so the optimal action is to not treat them at all).
All experimental settings are similar to \cite{riviere2018phase,aziz2019multi} and we use them unless otherwise mentioned.
All experiments are repeated 500 times and the results are averaged.

\begin{table}[ht]
\caption{Settings of Synthetic Model} \label{table:tox_eff_prob}
\footnotesize
\begin{center}
\begin{tabular}{c|c|cccccc}
\hline
$S$ & \textbf{Type}  &\multicolumn{6}{c}{\textbf{Characteristics}} \\
\hline
\multirow{2}{*}{1} & $q_{1,k}$'s & 0.01 & 0.02 & 0.05 & 0.1 & 0.1 & 0.1 \\
& $p_{1,k}$'s & 0.01 & 0.01 &0.05 & 0.15 &0.2 &0.45 \\
\hline
\multirow{2}{*}{2} & $q_{2,k}$'s & 0.1 & 0.2 & 0.3 & \textbf{0.5} & 0.6 & 0.65 \\
& $p_{2,k}$'s & 0.01 & 0.05 & 0.15 & \textbf{0.2} & 0.45 & 0.6 \\
\hline
\multirow{2}{*}{3} & $q_{3,k}$'s & 0.2 & 0.5 & 0.6 & \textbf{0.8} & 0.84 & 0.85 \\
& $p_{3,k}$'s &  0.01 & 0.05 & 0.15 & \textbf{0.2} & 0.45 & 0.6 \\
\hline
\end{tabular}
\end{center}
\end{table}
           
For benchmarks, we modify the following baseline algorithms to include contexts as in \cite{zhou2015survey} and implement them for contextual clinical trials: 3+3 design \citep{storer1989design}, Contextual UCB \citep{auer2002finite,varatharajah2018contextual}, Contextual KL-UCB \citep{garivier2011kl,varatharajah2018contextual}, and Contextual independent Thompson sampling (TS) \citep{aziz2019multi}.
The details of the algorithms are provided in the supplementary material.

\subsection{Recommended Dose Error Rates}
In Table \ref{table:dose_errors}, we report the recommended dose error rate of each algorithm both as an average across all subgroups (Total) and individually for each subgroup (SG1, SG2, SG3).
From the results, we can see that C3T-Budget achieves the lowest total error rate, with a lower error within each subgroup than all benchmarks except: C-UCB and C-Indep-TS whose error is lower in SG1 due to the fact that SG1 has the highest arrival rate and C-UCB and C-Indep-TS recruits all patients, C3T-Budget instead makes better use of the budget to {\em balance} the learning across all subgroups; and C3T-Budget-E, which notably only achieves lower error on SG3, the subgroup with the highest efficacy. This is exactly as we would expect, since the goal of C3T-Budget-E is to focus on the highest efficacy subgroups, thus resulting in better estimations of the optimal dose for those subgroups.
The superior performance of C3T-Budget is due to its capability to effectively recruit patients from subgroups that will best improve recommended dose errors while remaining within the budget constraint.
This will be investigated further in Section \ref{sec:sim:ratio_subgroup}.

\begin{table}[ht]
\vspace{-0.15in}
\caption{Recommended Dose Error Rates} \label{table:dose_errors}
\footnotesize
\begin{center}
\begin{tabular}{c|c|c|c|c}
\hline
\multirow{2}{*}{\textbf{Algorithm}} & \multicolumn{4}{c}{\textbf{Errors}} \\
\cline{2-5}
& SG 1 & SG 2 & SG 3 & Total \\
\hline
C3T-Budget	& 0.050 & \textbf{0.056} & 0.036 & \textbf{0.047} \\
C3T-Budget-E& 0.204 & 0.138 & \textbf{0.020} & 0.121 \\
C-UCB		& \textbf{0.012} & 0.414 & 0.316 & 0.247 \\
C-KL-UCB    & 0.204 & 0.440 & 0.368 & 0.337 \\
C-Indep-TS	& \textbf{0.012} & 0.354 & 0.372 & 0.246 \\
C-3+3		& 0.874 & 0.746 & 0.780 & 0.800 \\
\hline
\end{tabular}
\end{center}
\end{table}

\subsection{Safe Dose Estimation Error Rates}
We define the type-I and type-II errors for dose safety as follows: Type-I error occurs when a safe dose is estimated to be unsafe and type-II error occurs when an unsafe dose is estimated to be safe.
We report these errors for C3T-Budgets and each of the benchrmark algorithms in Table \ref{table:safe_errors}.
We see that C3T-Budget achieves the lowest total error rates.
In C3T-Budget, subgroups are recruited to maximize the confidence of our estimations, which in turn leads to confident estimates of safe doses.

\begin{table}[ht]
\vspace{-0.15in}
\caption{Safe Dose Estimation Error Rates} \label{table:safe_errors}
\footnotesize
\begin{center}
\begin{tabular}{c|c|c|c}
\hline
\multirow{2}{*}{\textbf{Algorithm}} & \multicolumn{3}{c}{\textbf{Errors}} \\
\cline{2-4}
& Type-I & Type-II & Total \\
\hline
C3T-Budget	& 0.0226 & \textbf{0.0198} & \textbf{0.0212} \\
C3T-Budget-E& \textbf{0.0200} & 0.0304 & 0.0252 \\
C-UCB		& 0.0248 & 0.0289 & 0.0268 \\
C-KL-UCB    & 0.0371 & 0.0437 & 0.0404 \\
C-Indep-TS	& 0.1006 & 0.0366 & 0.0686 \\
C-3+3		& 0.0400 & 0.1081 & 0.0741 \\
\hline
\end{tabular}
\end{center}
\end{table}

\subsection{Efficacy and Toxicity Per Patient}
We now report the efficacy and toxicity per patient for each algorithm in Table \ref{table:efficacy}.
From the results, we can see that both C3T-Budgets, but particularly C3T-Budget-E, achieve a much higher efficacy than other algorithms.
In addition, C3T-Budgets achieve a lower toxicity than other algorithms except for the 3+3 design.
Note, however, that the 3+3 design is designed specifically to avoid safety violations - it ends when a certain number of toxicities have occurred.
Hence, it naturally has a low toxicity compared to the bandit algorithms, but it achieves a much lower efficacy as well as shown in Table \ref{table:efficacy} and inaccurately recommends doses as shown in Table \ref{table:dose_errors}.

\begin{table}[ht]
\vspace{-0.17in}
\caption{Efficacy and Toxicity Per Patient} \label{table:efficacy}
\footnotesize
\begin{center}
\begin{tabular}{c|c|c}
\hline
\textbf{Algorithm} & \textbf{Efficacy} & \textbf{Toxicity} \\
\hline
C3T-Budget & 0.4975 & 0.1881 \\
C3T-Budget-E & \textbf{0.5791} & 0.1911 \\
C-UCB & 0.4154 & 0.3235 \\
C-KL-UCB & 0.3823 & 0.2621 \\
C-Indep-TS & 0.4101 & 0.3157 \\
C-3+3 & 0.3081 & \textbf{0.1537} \\
\hline
\end{tabular}
\end{center}
\end{table}

\subsection{Recruited Patients for Each Subgroup and Efficacy Estimation Errors}
\label{sec:sim:ratio_subgroup}
To understand the recruitment behaviors of C3T-Budgets and how they relate to efficacy estimation error, Fig. \ref{fig:ratio} shows the number of patients recruited from each subgroup and the mean squared error (MSE) of the efficacy estimate of the optimal dose in each subgroup (for subgroup 1, we provide the averaged MSE of all safe doses since there is no optimal dose) over time.
Note that the recruitment numbers of the other bandit algorithms are equal to those with $\pi_s$ since they do not address the budget constraint.

\begin{figure}[ht]
\vspace{-0.1in}
\centering
\includegraphics[width=1\linewidth]{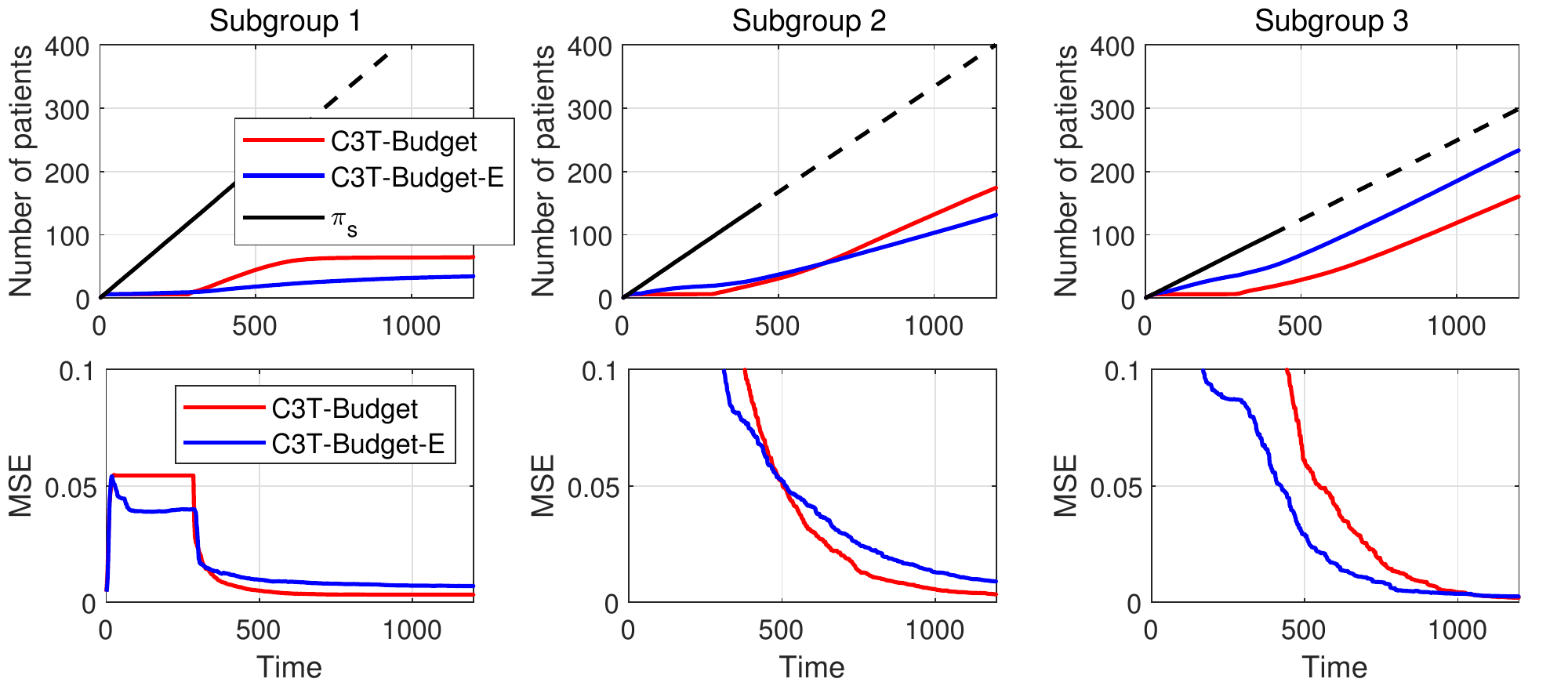}
\caption{Number of recruited patients and the MSE of the efficacy estimation of each subgroup.\label{fig:ratio}}
\end{figure}

From the figure, we can see that both C3T-Budgets recruit patients independently of the arrival rate of each subgroup.
In C3T-Budget, the patients are recruited to achieve low enough estimation errors for all subgroups, demonstrated by the fact that recruitment for subgroup 1 is stopped after the estimation errors of the doses for subgroup 1 become low enough.
C3T-Budget then starts to recruit more patients from subgroups 2 and 3.
Subgroup 2 ends up having the most patients recruited for it, due to the fact that the expected efficacy (0.5) of its optimal dose is more difficult to estimate than those of subgroups 1 and 3 (0.1 and 0.8, respectively).
On the other hand, C3T-Budget-E recruits more patients from subgroup 3 due to it having the highest expected efficacy.
C3T-Budget-E recruits only enough patients from subgroup 1 to allow for reasonable belief that it has the lowest expected efficacy.
More patients are recruited from subgroup 2 than from subgroup 1, due to the fact that it requires more samples to be sure that the efficacy is lower than that of subgroup 3, but once the algorithm has confidence that the efficacy is lower, recruitment slows, in favour of recruiting from subgroup 3, thus leading to higher estimation error for subgroup 2.
We see from these figures that the recruitment behavior of C3T-Budget and C3T-Budget-E are as intended.

\subsection{Impact of Budget}
In this experiment, we report the recommended dose error rates and efficacy per patient of each algorithms while we vary the budget, $B$, in Fig. \ref{fig:var_B}.
Since the ratio $B/T$ affects C3T-Budgets, we set the time-horizon $T$ such that the ratio $B/T$ is constant and $1/3$.
The results of the 3+3 design are not provided here since its performance does not depend on the total number of patients to dose.

\begin{figure}[ht]
\vspace{-0.17in}
\centering
\subfloat[Total recommended dose error rate.]{\includegraphics[width=0.48\linewidth]{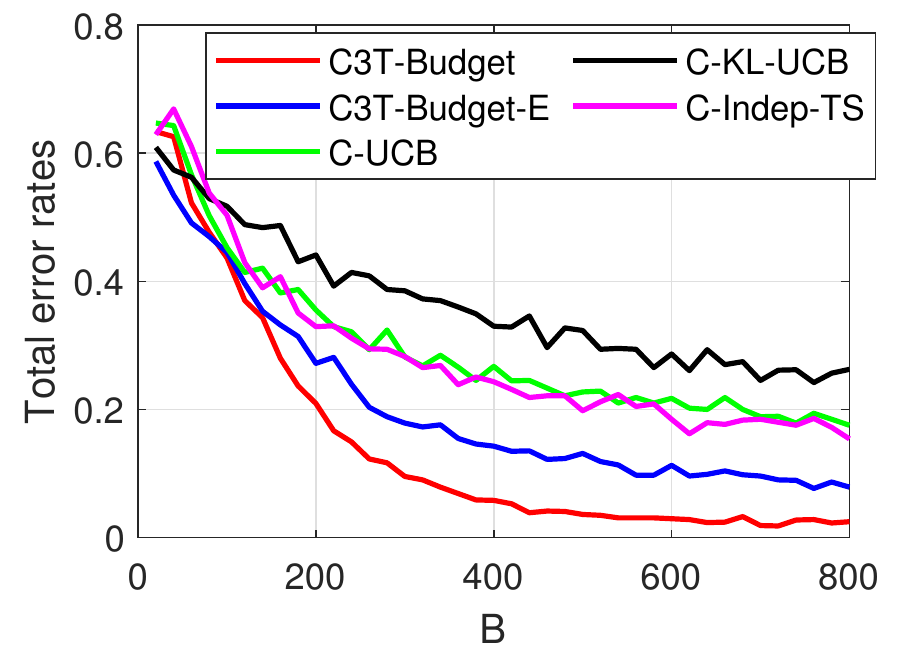}
\label{fig:dose_err_var_B}}
\hfil
\subfloat[Efficacy per patient.]{\includegraphics[width=0.48\linewidth]{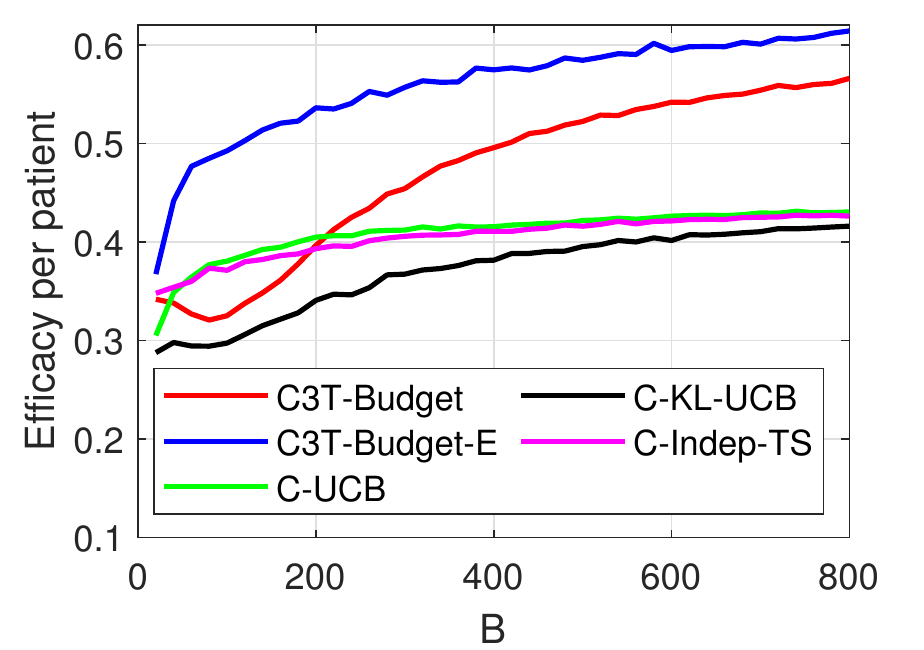}
\label{fig:eff_var_B}}
\caption{Total recommended dose error rate and efficacy per patient varying the budget.\label{fig:var_B}}
\end{figure}

From Figs. \ref{fig:dose_err_var_B} and \ref{fig:eff_var_B}, we can see that the recommended dose error rates of all algorithms decrease and the efficacy per patient of all algorithms increases, as $B$ increases.
However, we see that for all $B>80$, C3T-Budget achieves the lowest error, while for $B<80$ (i.e. when the budget is particularly limited), we see that C3T-Budget-E achieves the lowest error. For all $B$, we see that C3T-Budget-E achieves the best efficacy.
Notably, as $B$ increases, the error of C3T-Budget drastically decreases and achieves near-zero error when $B\geq500$.
These results clearly demonstrate that C3T-Budgets do indeed effectively utilize the budget constraints unlike the other algorithms.

\subsection{Impact of Time Horizon}
We show the impact of varying the time-horizon $T$ on the performance of our algorithm.
If the time-horizon $T$ is set to be the budget $B$, then C3T-Budgets do not skip any patients.
Thus, C3T-Budget and C3T-Budget-E become the same algorithm.
Note that all the other algorithms do not depend on the time-horizon $T$.

\begin{figure}[ht]
\vspace{-0.17in}
\centering
\subfloat[Total recommended dose error rate.]{\includegraphics[width=0.48\linewidth]{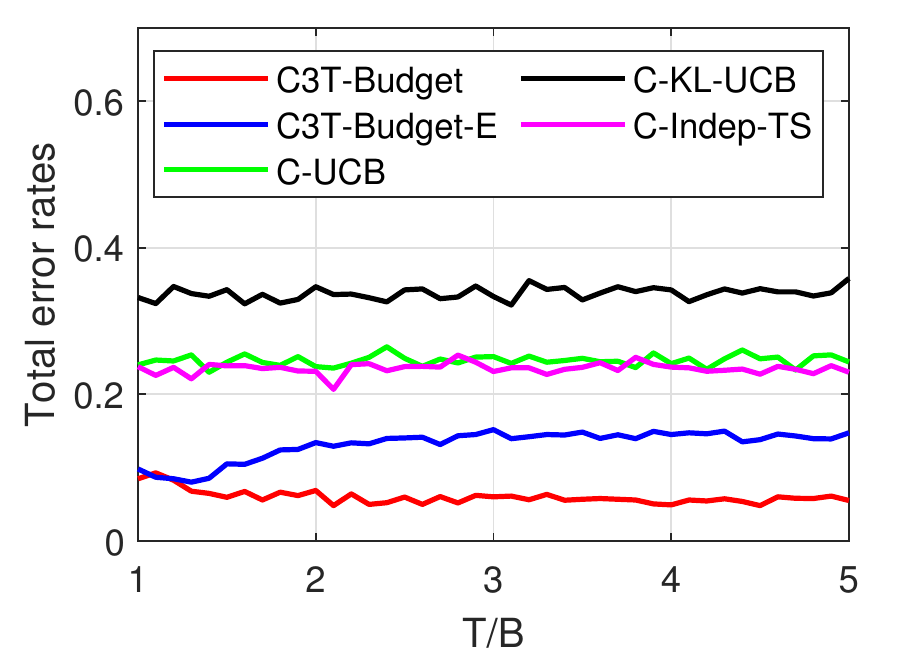}
\label{fig:dose_err_var_TB}}
\hfil
\subfloat[Efficacy per patient.]{\includegraphics[width=0.48\linewidth]{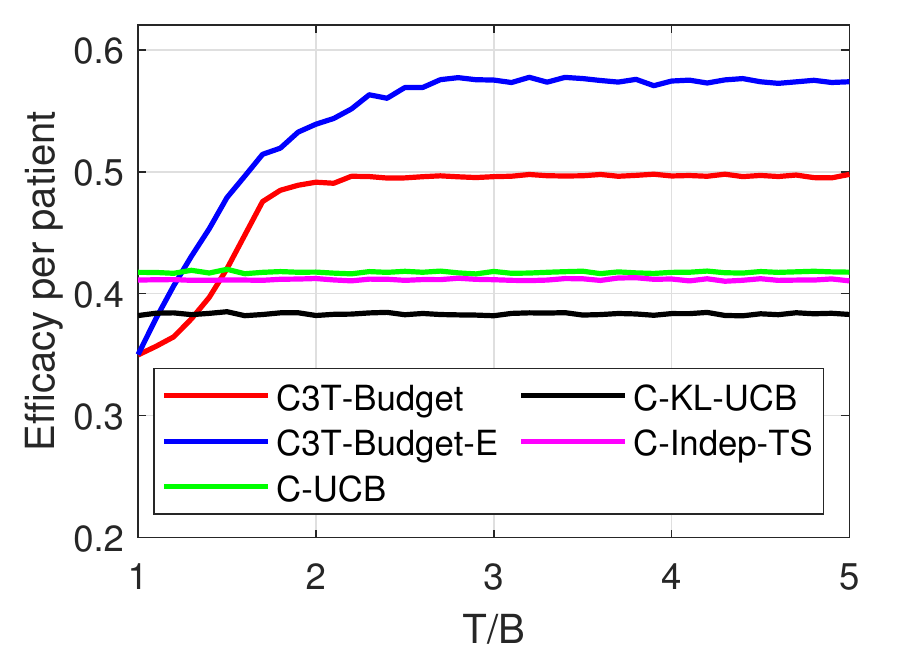}
\label{fig:eff_var_TB}}
\caption{Total recommended dose error rate and efficacy per patient varying the time-horizon.\label{fig:var_TB}}
\end{figure}

In Fig. \ref{fig:var_TB}, we show the recommended dose error rates and efficacy per patient of the algorithms when varying $T/B$ from 1 to 5.
From Fig. \ref{fig:dose_err_var_TB}, we can see that the recommended dose error rate of C3T-Budget decreases while that of C3T-Budget-E increases as $T/B$ increases.
As $T/B$ increases, C3T-Budgets can recruit subgroups more flexibly, since they have more time in which to ``wait" for patients from a more desirable subgroup (they can afford to skip more patients).
Thus, the error rate of C3T-Budget decreases since it recruits patients that will best reduce the error.
On the other hand, the increased recruitment policy allows C3T-Budget-E to focus its recruitment more on subgroup 3 (so that it achieves high cumulative efficacy), but this in turn results in a higher error rate for subgroups 1 and 2, thus resulting in a higher total error.
From Fig. \ref{fig:eff_var_TB}, we can see that as $T/B$ increases, the efficacies of both C3T-Budgets increase thanks to the increased recruitment flexibility.
From the results, we can see that both error rate and efficacy are saturated.
If $T/B$ is large enough, even very small arrival rates for a given group can be accommodated. The limiting factor then becomes the budget, $B$, and increasing $T/B$ will have no (or very little) further effect.

\section{CONCLUSION}
In this paper, we have studied dose-finding clinical trials with heterogeneous groups and a limited budget.
We proposed C3T-Budget in which, for each group, the dose with the highest estimated efficacy is chosen, but the group may be skipped if the algorithm is sufficiently confident in its current estimation of the efficacy.
In addition, we proposed an extension, C3T-Budget-E, which is particularly applicable in very low-budget settings, where accurately estimating efficacy for the most promising subgroups is more important than accurate estimation for all groups.
These designs can effectively utilize the limited budget to balance the trade-off between learning and treatment.
We demonstrated that our proposed algorithms outperform state-of-the-art algorithms through a series of experiments.

\bibliography{mybib}

\onecolumn

\thispagestyle{empty}
\hsize\textwidth
  \linewidth\hsize \toptitlebar {\centering
  {\Large\bfseries Supplementary Material for Contextual Constrained Learning for Dose-Finding Clinical Trials \par}}
 \bottomtitlebar

\renewcommand\thesection{\Alph{section}}
\setcounter{section}{0}

\section{PRELIMINARIES}
Before providing the proof of the theorems, we introduce some regularity assumptions on the dose-toxicity model as follows.
\begin{assumption}
\label{assumption:regularity}
There exist $C_{1,s,k}>0$, $1<\gamma_{1,s,k}$, $C_{2,s,k}$, and $0<\gamma_{2,s,k} \leq 1$ such that $|p_{s,k}(a)-p_{s,k}(a')|\geq C_{1,s,k}|a-a'|^{\gamma_{1,s,k}}$ and $|p_{s,k}(a)-p_{s,k}(a')|\leq C_{2,s,k}|a-a'|^{\gamma_{2,s,k}}$ for all $s\in\mathcal{S}$, $k\in\mathcal{K}$, and $a$, $a'\in\mathcal{A}$.
\end{assumption}
We then immediately have the following proposition.
\begin{proposition}
For $p_{s,k}(a),~\forall s\in\mathcal{S},\forall k\in\mathcal{K}$ satisfying Assumption \ref{assumption:regularity},

1. $p_{s,k}(a)$ is invertible;

2. For each $s\in\mathcal{S}$, $k\in\mathcal{K}$, and $d$, $d'\in\mathcal{P}$, we have $|p_{s,k}^{-1}(d)-p_{s,k}^{-1}(d')|\leq\bar{C}_{1,s,k}|d-d'|^{\bar{\gamma}_{1,s,k}}$, where $\bar{\gamma_{1,s,k}}=\frac{1}{\gamma_{1,s,k}}$ and $\bar{C}_{1,s,k}=C_{1,s,k}^{-\frac{1}{\gamma_{1,s,k}}}$.
\end{proposition}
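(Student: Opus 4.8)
The plan is to derive both claims directly from the lower-bound inequality in Assumption \ref{assumption:regularity}, which is the only ingredient that really matters here; the upper (H\"older) bound plays merely a supporting role in guaranteeing continuity.

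For Part 1, I would first establish injectivity. Take any $a\neq a'$ in $\mathcal{A}$. Then $|a-a'|>0$, and since $C_{1,s,k}>0$, the lower bound $|p_{s,k}(a)-p_{s,k}(a')|\geq C_{1,s,k}|a-a'|^{\gamma_{1,s,k}}>0$ forces $p_{s,k}(a)\neq p_{s,k}(a')$. Hence $p_{s,k}$ is injective on $\mathcal{A}$, so it admits a well-defined inverse $p_{s,k}^{-1}$ on its range $\mathcal{P}=p_{s,k}(\mathcal{A})$. If one additionally wants a continuous inverse, the upper H\"older bound shows $p_{s,k}$ is continuous, and on a real interval a continuous injection is strictly monotone and thus a homeomorphism onto $\mathcal{P}$.

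For Part 2, I would fix $d,d'\in\mathcal{P}$ and set $a=p_{s,k}^{-1}(d)$ and $a'=p_{s,k}^{-1}(d')$, so that $d=p_{s,k}(a)$ and $d'=p_{s,k}(a')$. Substituting into the lower bound gives $|d-d'|\geq C_{1,s,k}|a-a'|^{\gamma_{1,s,k}}$. Rearranging and raising both sides to the power $1/\gamma_{1,s,k}$ (valid since the quantities are non-negative and $\gamma_{1,s,k}>1$) yields
\[
|a-a'|\leq C_{1,s,k}^{-1/\gamma_{1,s,k}}\,|d-d'|^{1/\gamma_{1,s,k}},
\]
which is precisely $|p_{s,k}^{-1}(d)-p_{s,k}^{-1}(d')|\leq\bar{C}_{1,s,k}|d-d'|^{\bar{\gamma}_{1,s,k}}$ with $\bar{C}_{1,s,k}=C_{1,s,k}^{-1/\gamma_{1,s,k}}$ and $\bar{\gamma}_{1,s,k}=1/\gamma_{1,s,k}$.

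There is no substantive obstacle: both parts reduce to elementary manipulation of the lower-bound inequality. The only points requiring a touch of care are (i) confirming that the domain $\mathcal{A}$ is an interval, so that ``injective $+$ continuous'' upgrades to genuine invertibility with a continuous inverse, and (ii) noting that the exponent inversion in Part 2 is legitimate because $|a-a'|$ and $|d-d'|$ are non-negative and the map $x\mapsto x^{1/\gamma_{1,s,k}}$ is monotone increasing, hence preserves the inequality. Neither of these is technically difficult.
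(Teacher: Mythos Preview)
Your proof is correct and is exactly the argument the paper has in mind: the proposition is stated there as an immediate consequence of Assumption~\ref{assumption:regularity} without any written proof, and your derivation---injectivity from the lower bound, then inverting the inequality by taking the $1/\gamma_{1,s,k}$ power---is the natural way to make ``immediately'' precise.
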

For notational simplicity, we denote $C_{1,s}=\min_{k\in\mathcal{K}}C_{1,s,k}$, $C_{2,s}=\max_{k\in\mathcal{K}} C_{2,s,k}$, $\gamma_{1,s}=\max_{k\in\mathcal{K}} \gamma_{1,s,k}$, $\gamma_{2,s}=\min_{k\in\mathcal{K}} \gamma_{2,s,k}$, $\bar{\gamma}_{1,s}=\gamma_{1,s}^{-1}$, and $\bar{C}_{1,s}=C_{1,s}^{-\bar{\gamma}_{1,s}}$.

\section{PROOF OF THEOREM 1}
From the Hoeffding bound, the following upper bound of the probability is given:
\[
\mathbb{P}[\hat{a}(t)-a_s^*>\alpha_s(t)+\epsilon]\leq \exp(-2N_s(t)(\alpha_s(t)+\epsilon)^2).
\]
In addition, the difference between the MTD threshold and the expected toxicity is also bounded as
\begin{IEEEeqnarray}{ll}
p_{s,I(t)}(a_s^*)-\zeta ~& \leq p_{s,I(t)}(a_s^*)-\!\zeta\!+\!\zeta-p_{s,I(t)}(a_s^*-\alpha_s(t)) \nonumber \\
&\leq C_{2,s}|a_s^*-\hat{a}_s(t)+\alpha_s(t)|^{\gamma_{2,s}.}\nonumber
\end{IEEEeqnarray}
By rearranging the terms, we have
\begin{IEEEeqnarray}{ll}
\mathbb{P}\left[\frac{\sum_{t=1}^{N_s(T)}p_{n,I(N_s^{-1}(t))}(a_s^*)}{N_s(T)}-\zeta<C_{n,2}\epsilon^{\gamma_{n,2}}\right] ~&\geq 1-\exp(-2N_s(T)(\alpha_s(N_s(T))+\epsilon)^2)\nonumber \\
& \geq 1-\delta_s. \nonumber
\end{IEEEeqnarray}

\section{PROOF OF THEOREM 2}
\subsection{Case 1: $k_s^*\neq 0$}
We first bound the probability that the recommended dose error for subgroup $s$ occurs with C3T-Budget if $k_s^*\neq 0$.
The event that the recommended dose error occurs satisfies the following:
\[
\left\lbrace \hat{k}_s^*\neq k_s^*\right\rbrace \subseteq 
\left\lbrace p_{s,k_s^*}(\hat{a}_s(T))>\zeta \right\rbrace
\cup\left\lbrace\bar{q}_{s,k_s^*}(T)< \theta\right\rbrace
\cup\left\lbrace\max_{k\in\mathcal{K}_s}\bar{q}_{s,k}(T)\neq k_s^*\right\rbrace.
\]
Thus, the probability of the recommended dose error for subgroup $s$ can be bounded as
\[
\mathbb{P}\left[\hat{k}_s^*\neq k_s^*\right]\leq 
\mathbb{P}\left[p_{s,k_s^*}(\hat{a}_s(T))>\zeta\right]
+ \mathbb{P}\left[\bar{q}_{s,k_s^*}(T)< \theta\right]
+\mathbb{P}\left[\max_{k\in\mathcal{K}_s}\bar{q}_{s,k}(T)\neq k_s^*\right].
\]
Then, we can bound the probability by obtaining the bound for each term.

\textbf{Bound of First Term}:
The probability in the first term can be transformed and bounded as
\begin{IEEEeqnarray}{ll}
\mathbb{P}\left[\hat{a}_s(N_s(T))<p_{s,k_s^*}^{-1}(\zeta)\right]~
& \leq \mathbb{P}\left[ |a_s^* - \hat{a}_s(N_s(T))| > \Gamma_{U_s} \right] \nonumber \\
& \leq \sum_{k\in\mathcal{K}}\mathbb{P}\left[ \left| \hat{p}_{s,k}(N_s(T))-p_{s,k}(a_s^*) \right|>\left( \frac{\Gamma_{U_s}N_s(T)}{N_{s,k}(T)\bar{C}_{1,s}K} \right)^{\gamma_{1,s}} \right] \nonumber \\
&\leq\sum_{k\in\mathcal{K}}2\exp\left( -2N_{s,k}(T)\left( \frac{\Gamma_{U_s}N_s(T)}{N_{s,k}(T)\bar{C}_{1,s}K} \right)^{\gamma_{1,s}} \right) \label{eqn:B1_1} \\
&\leq 2K\exp\left( -2\left( \frac{\Gamma_{U_s}}{\bar{C}_{1,s}K} \right)^{\gamma_{1,s}}N_s(T) \right), \label{eqn:B1_2}
\end{IEEEeqnarray}
where $\Gamma_{U_s}=|a_s^*-p_{s,k}^{-1}(\zeta)|$.
The inequality in \eqref{eqn:B1_1} follows from the Hoeffding's inequality and the inequality in \eqref{eqn:B1_2} follows from the regularity assumption $\gamma_{1,s}>1$.

\textbf{Bound of Second Term}:
From the Chernoff-Hoeffding's inequality, we have
\begin{IEEEeqnarray}{ll}
\mathbb{P}\left[\bar{q}_{s,k_s^*}(T) < \theta\right]~ 
&=\mathbb{P}\left[\bar{q}_{s,k_s^*}(T)<q_{s,k_s^*} - (q_{s,k_s^*}-\theta)\right] \nonumber \\
&\leq \exp\left(-2N_{s,k_s^*}(T)\Delta_{s,\theta}^2\right) \nonumber \\
&\leq \exp\left(-\frac{8}{25}\frac{cN_s(T)\Delta_{s,\theta}^2}{\Delta_{s^*}^2}\right), \label{eqn:B2_1}
\end{IEEEeqnarray}
where $\Delta_{s^*,\theta}=|q_{s,k_s^*}-\theta|$ and 
\[
\Delta_{s^*}=\left\lbrace\begin{array}{ll}
9(\min_{k\in\mathcal{K},k\neq k_s^*}q_{s,k_s^*}-q_{s,k}),&\textrm{if }k_s^*=\max_{k\in\mathcal{K}}\{q_{s,k}\}, \\
\max_{k\in\mathcal{K}}\{q_{s,k}\}-q_{s,k_s^*}, & \textrm{otherwise}.
\end{array}\right.
\]
The inequality in \eqref{eqn:B2_1} follows from the lemmas in \citep{audibert2010best}.

\textbf{Bound of Third Term}:
The bound of the third term can be obtained by following the proof of Theorem 1 in \citep{audibert2010best}.
The third term is bounded as
\begin{equation}
\label{eqn:B3_1}
\mathbb{P}\left[\max_{k\in\mathcal{K}_s}\bar{q}_{s,k}\neq k_s^*\right] \leq 2N_s(T)K\exp\left(-\frac{2cN_s(T)}{25}\right).
\end{equation}

\textbf{Total Bound}:
From the bounds in \eqref{eqn:B1_2}, \eqref{eqn:B2_1}, and \eqref{eqn:B3_1}, we can bound 
the probability of the recommended dose error for subgroup $s$ as
\[
\mathbb{P}\left[\hat{k}_s^*\neq k_s^*\right]\leq 
\exp\left(-M_{(1a)}N_s(T)\right)
+ 2K\left(\exp\left( -M_{(1b)}N_s(T) \right)+N_s(T)\exp\left(-M_{(1c)}N_s(T)\right)\right),
\]
where
$M_{(1a)}=\frac{8}{25}\frac{c\Delta_{s,\theta}^2}{\Delta_{s^*}^2}$,
$M_{(1b)}=2\left( \frac{\Gamma_{U_s}}{\bar{C}_{1,s}K} \right)^{\gamma_{1,s}}$, and
$M_{(1c)}=\frac{2c}{25}$.

\subsection{Case 2: $k_s^*=0$}
For the case with $k_s^*=0$, we can bound the probability of the recommended dose error for subgroup $s$ similar to Case 1.
We have
\[
\left\lbrace \hat{k}_s^*\neq 0\right\rbrace \subseteq 
\bigcup_{k\in\mathcal{K}}\left\lbrace p_{s,k}(\hat{a}_s(T))\leq \zeta\right\rbrace\cup\bigcup_{k\in\mathcal{K}}\left\lbrace \bar{q}_{s,k}(T)\geq \theta\right\rbrace
\]
and
\[
\mathbb{P}\left[ \hat{k}_s^*\neq 0 \right] \leq \sum_{k\in\mathcal{K}}\mathbb{P}\left[p_{s,k}(\hat{a}_s(T))\leq \zeta\right]
+\sum_{k\in\mathcal{K}}\mathbb{P}\left[\bar{q}_{s,k}(T)\geq \theta\right].
\]
Similar to Case 1, we can bound the probability as
\[
\mathbb{P}\left[\hat{k}_s^*\neq 0\right]\leq 
K \exp\left(-M_{(2a)}N_s(T)\right)
+ 2K^2\exp\left( -M_{(2b)}N_s(T) \right),
\]
where
$M_{(1a)}=\frac{8}{25}\frac{c\Delta_{s^*\theta}^2}{\underline{\Delta}_{s^*}^2}$,
$M_{(1b)}=2\left( \frac{\bar{\Gamma}_{U_s}}{\bar{C}_{1,s}K} \right)^{\gamma_{1,s}}$,
$\bar{\Delta}_{s^*,\theta}=\max_{k\in\mathcal{K}}|\theta-q_{s,k}|$, $\underline{\Delta}_{s^*}=\max_{k\in\mathcal{K}}\{q_{s,k}\}-\min_{k\in\mathcal{K}}\{q_{s,k}\}$, and $\bar{\Gamma}_{U_s}=\max_{k\in\mathcal{K}}\Gamma_{U_s}$.

\subsection{Recommended Dose Error Bound}
Finally, we have the theorem with $M_{R1}=\max\{1+2K+N_s(T),K+2K^2\}$ and $M_{R2}=\min\{M_{(1a)},M_{(1b)},M_{(1c)},M_{(2a)},M_{(2b)}\}$.

\section{Worst-Case Reget Bound For Total Efficacy of C3T-Budget-E}
To evaluate C3T-Budget-E, we compare its performance to that of an algorithm with the complete knowledge of $q_{s,k}$'s and $p_{s,k}$'s called an oracle algorithm.
We denote the expected total cumulative efficacy achieved by the oracle algorithm by $E^*(T,B)$.
Then, the regret of C3T-Budget-E is defined as
\begin{equation}
\label{eqn:regret}
R(T,B)=E^*(T,B)-E(T,B),
\end{equation}
where $E(T,B)$ is the expected total cumulative efficacy.
Then, we provide the efficacy regret bound of C3T-Budget.
\begin{theorem}
\label{thm:regret_bound}
Given a fixed $\rho\in(0,1)$, the worst-case regret of C3T-Budget-E is bounded as
\[
R(T,B)\leq T\bar{\delta}\bar{\Delta} +q_1^*\sqrt{\rho(1-\rho)T}+M_E\log T + O(1),
\]
where
$M_E$ is a non-negative constant (provided in our supplementary material).
\end{theorem}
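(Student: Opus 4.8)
The plan is to decompose the regret $R(T,B)=E^*(T,B)-E(T,B)$ into three pieces that match the three terms of the bound: a loss forced by the soft safety constraint, a loss from the randomness of budget consumption, and the usual bandit-learning loss from UCB exploration. To separate these cleanly I would introduce an intermediate \emph{LP-oracle} that knows the true efficacies $q_s^*=q_{s,k_s^*}$ and the true optimal doses, but consumes budget through the same randomized acceptance rule as C3T-Budget-E, namely accepting a patient from subgroup $s$ with probability $\psi_s(\rho)$ from \eqref{eqn:lp_solution}. Writing its expected cumulative efficacy as $E_{LP}(T,B)$, I would split $R(T,B)=[E^*(T,B)-E_{LP}(T,B)]+[E_{LP}(T,B)-E(T,B)]$ and bound each bracket in turn.

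For the first bracket I would sandwich both oracles using the value $V_{LP}=T\sum_{s}\psi_s(\rho)\pi_s q_s^*$ of the LP in \eqref{eqn:lp_problem}. Relaxing the hard constraint $\sum_t Z_t\le B$ to the average constraint $\sum_s\psi_s\pi_s\le\rho$ only enlarges the feasible set, so $E^*(T,B)\le V_{LP}$; conversely, running the randomized acceptance rule with a hard cutoff at $B$ attains at least $V_{LP}$ minus the efficacy lost whenever the realized acceptance count overshoots $B$. That count is a sum of independent Bernoulli variables concentrating around its mean $\rho T=B$, so by a standard Hoeffding/Bernstein bound the expected overshoot is $O(\sqrt{\rho(1-\rho)T})$, and each wasted budget unit is worth at most the top efficacy $q_1^*$. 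Combining the two inequalities bounds the first bracket by $q_1^*\sqrt{\rho(1-\rho)T}$.

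For the second bracket I would further split the learning gap into a safety part and a UCB part. The safety part accounts for rounds on which the toxicity confidence interval $\alpha_s(t)$ fails, so that $\mathcal{K}_s(t)$ either drops the true optimal dose or admits an unsafe one; by Theorem \ref{thm:safety_violation_bound} this event has probability at most $\delta_s$ for subgroup $s$, and each occurrence costs at most the maximal efficacy gap $\bar{\Delta}=\max_{s,k,k'}|q_{s,k}-q_{s,k'}|$. Summing over the horizon gives the linear term $T\bar{\delta}\bar{\Delta}$ with $\bar{\delta}=\max_s\delta_s$; this term is an inherent consequence of the soft safety constraint in \eqref{eqn:problem2} and cannot be removed by better learning. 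The UCB part is the classical gap-dependent argument: conditioned on the safety intervals holding, the number of times a suboptimal dose is administered to subgroup $s$ is $O(\log N_s(T))=O(\log T)$ through the index $\hat{q}_{s,k}(t)$, and summing the gap-weighted counts over all doses and subgroups produces $M_E\log T$, with the remaining constant and lower-order contributions absorbed into $O(1)$.

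The main obstacle is the coupling between the budget argument and the learning argument. C3T-Budget-E orders subgroups by the estimated indices $\hat{q}_s^*(t)$ and re-solves the LP every round, so its acceptance probabilities are themselves random and differ from the LP-oracle probabilities $\psi_s(\rho)$ until the estimates have converged. I would control this by showing that, once $N_s(T)$ is large enough for the UCB concentration to force the estimated ordering of the $\hat{q}_s^*(t)$ to agree with the true ordering $q_1^*>\cdots>q_S^*$, the realized acceptance policy matches the LP-oracle policy; the earlier rounds, during which mis-ordering can occur, number only $O(\log T)$ per subgroup and therefore fold into the $M_E\log T$ term. Making this hand-off rigorous --- simultaneously tracking the budget dynamics and the concentration of every $\hat{q}_s^*(t)$ --- is the technical crux, since the budget-fluctuation bound of the first bracket implicitly presumes the correct, converged acceptance probabilities.
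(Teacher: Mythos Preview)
Your decomposition differs from the paper's, and the difference exposes a genuine gap. The paper does not introduce an intermediate LP-oracle with \emph{fixed} acceptance probabilities $\psi_s(\rho)$. Instead it upper-bounds $E^*(T,B)\le Tv(\rho)$ directly and splits $Tv(\rho)-E(T,B)$ into a within-subgroup dose-selection part $R^{(1)}=\sum_{s,k}\Delta^{(s)}_{s,k}\mathbb{E}[N_{s,k}(T)]$ and a subgroup-acceptance part $R^{(2)}=\sum_t\mathbb{E}\big[v(\rho)-\sum_s\hat{\psi}_s(\rho_\tau)\pi_s q_s^*\big]$. The safety term $T\bar{\delta}\bar{\Delta}$ and the UCB $\log T$ terms arise in $R^{(1)}$ much as you describe, but the $q_1^*\sqrt{\rho(1-\rho)T}$ term comes from $R^{(2)}$ via the analysis of \citet{wu2015algorithms}, which handles \emph{jointly} the estimated ordering of the $\hat q_s^*(t)$'s and the adaptive average budget $\rho_\tau=b_\tau/\tau$; it appears only in the boundary case $\rho=\eta_s$.

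The gap in your plan is the claim that ``once the estimated ordering agrees with the true ordering, the realized acceptance policy matches the LP-oracle policy.'' That is not true: even with the correct ordering, C3T-Budget-E solves the LP at $\rho_\tau$, not at $\rho$, and in the boundary case $\rho_\tau$ fluctuates around $\eta_{\tilde s}$ so that the threshold $\tilde s(\rho_\tau)$ keeps switching. This mismatch between the adaptive policy and your fixed-$\rho$ LP-oracle costs $\Theta(\sqrt{T})$ in that case, so your second bracket $E_{LP}(T,B)-E(T,B)$ is \emph{not} $O(\log T)$ after removing the safety term; you have already spent the $\sqrt{T}$ allowance on the first bracket and cannot reuse it. A clean fix is to replace your intermediate by the \emph{adaptive} LP-oracle (known $q_s^*$'s, acceptance via $\psi_s(\rho_\tau)$), which is exactly the ALP of \citet{wu2015algorithms}: then the first bracket carries the $\sqrt{T}$ term via their boundary-case lemma, and the second bracket is pure learning and really is $T\bar{\delta}\bar{\Delta}+O(\log T)$ --- but at that point you have essentially reproduced the paper's $R^{(1)}/R^{(2)}$ split.
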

\begin{proof}
For the regret bound of C3T-Budget-E, we first define the optimal value of the LP problem that can be obtained by solving the LP problem with $q_s^*$'s (see (3) in our main paper) as
\[
v(\rho)=\sum_{s=1}^{\tilde{s}(\rho)}\pi_s d_s^*+\psi_{\tilde{s}(\rho)+1}(\rho)\pi_{\tilde{s}(\rho)+1}d_{\tilde{s}(\rho)+1}^*.
\]
This optimal value $v(\rho)$ can be considered the maximum expected reward in a single round with average budget $\rho$.
Thus, using $v(\rho)$, we can bound the total expected cumulative efficacy of the oracle $E^*(T,B)$ as the following lemma.
\begin{lemma}
\label{lemma:upper_bound}
\citep{wu2015algorithms} If the time-horizon and budget are given by $T$ and $B$, respectively, then we have $\hat{E}(T,B)=Tv(\rho)\geq E^*(T,B)$.
\end{lemma}
Then, the upper bound of the expected cumulative efficacy of the oracle $E^*(T,B)$ as follows.
\begin{IEEEeqnarray}{ll}
R(T,B)~&= E^*(T,B)-E(T,B) \nonumber \\
&\leq \hat{E}(T,B)-E(T,B) \nonumber \\
&= Tv(\rho)-\sum_{s\in\mathcal{S}}\sum_{k\in\mathcal{K}} q_{s,k}\mathbb{E}[N_{s,k}(T)]. \nonumber
\end{IEEEeqnarray}

From the regret using $\hat{E}(T,B)$, we can partition the regret according to the source of regret as follows:
\begin{IEEEeqnarray}{ll}
R(T,B)~&= Tv(\rho)-\sum_{s\in\mathcal{S}}\sum_{k\in\mathcal{K}} 
q_{s,k}\mathbb{E}[N_{s,k}(T)] \nonumber \\
& = \sum_{s\in\mathcal{S}}\sum_{k\in\mathcal{K}} 
\Delta_{s,k}^{(s)}\mathbb{E}[N_{s,k}(T)]
+Tv(\rho)-\sum_{s\in\mathcal{S}}q_s^*\mathbb{E}[N_s(T)] \nonumber \\
& = \underbrace{\sum_{s\in\mathcal{S}}\sum_{k\in\mathcal{K}} 
\Delta_{s,k}^{(s)}\mathbb{E}[N_{s,k}(T)]}_{=R^{(1)}(T,B)}  + \underbrace{\sum_{t=1}^{T} \mathbb{E}\left[ v(\rho)-\sum_{s\in\mathcal{S}}\hat{\psi}_s(\rho(t))\pi_s q_s^* \right]}_{=R^{(2)}(T,B)}. \nonumber
\end{IEEEeqnarray}
Recall that $\Delta_{s,k}^{(s')}$ is the difference between the optimal expected efficacy of subgroup $s'$ and the expected efficacy of subgroup $s$ with dose $k$, $\Delta_{s,k}^{(s')}=q_{s'}-q_{s,k}$.
The decomposed regret $R^{(1)}$ represents the regret due to taking suboptimal doses
and the other decomposed regret $R^{(2)}$ represents the regret due to ordering errors in subgroups.
It is worth noting that in $R^{(2)}$, it is supposed that the optimal doses are chosen.
Finally, the regret of C3T-Budget-E is bounded as
\[
R(T,B)\leq  R^{(1)}(T,B)+R^{(2)}(T,B).
\]
Then, we can bound the regret of C3T-Budget by obtaining the bound of each regret.

\subsection{Bound of $R^{(1)}$}
We first bound the first part of the regret, $R^{(1)}$.
In C3T-Budget-E, the set of candidate recommended doses is constructed in each round and the dose is chosen among the doses in the set.
Thus, taking the suboptimal doses can occur due to not only the inaccurate estimation of the efficacy but also the inaccurate estimation of the toxicity.
To reflect this, we decompose the regret $R^{(1)}$ into two parts according to whether the optimal dose is included in the set of the candidate doses or not as follows.
\[
R^{(1)}(T,B)=\underbrace{\sum_{t=1}^{T} \sum_{s\in\mathcal{S}} \mathbb{I}\{H(t)=s\} \mathbb{P}[k_s^*\notin \mathcal{K}_{s}(t)]\bar{\Delta}_s}_{=R^{(1a)}(T,B)}+\underbrace{\sum_{t=1}^{T} \sum_{s\in\mathcal{S}} \mathbb{I}\{H(t)=s\}\mathbb{P}[k_s^*\in\mathcal{K}_{s}(t)]R_{s,2}(t)}_{=R^{(1b)}(T,B)},
\]
where $\bar{\Delta}_s=\max_{k\in\mathcal{K}}\Delta_{s,k}^{(s)}$.

\textbf{Bound of} $R^{(1a)}(T,B)$:
We bound the regret $R^{(1a)}$.
Since the event $\{k_s^*\notin\mathcal{K}_s(t)\}$ can be bounded by
$\{p_{s,k_s^*}(\hat{a}_s(t)-\alpha_s(t))>\zeta\}\cup\{\hat{q}_{s,k_s^*}(t)<\theta \}$,
we can bound the regret $R^{(1a)}$ as
\begin{IEEEeqnarray}{ll}
R^{(1a)}(T,B)~&\leq\sum_{t=1}^{T} \sum_{s\in\mathcal{S}} \mathbb{I}\{H(t)=s\} \left(\mathbb{P}[p_{s,k_s^*}(\hat{a}_s(t)-\alpha_s(t))>\zeta]+\mathbb{P}[\hat{q}_{s,k_s^*}(t)<\theta]\right)\bar{\Delta}_s \nonumber \\
&=\underbrace{\sum_{t=1}^{T} \sum_{s\in\mathcal{S}} \mathbb{I}\{H(t)=s\} \mathbb{P}[p_{s,k_s^*}(\hat{a}_s(t)-\alpha_s(t))>\zeta]\bar{\Delta}_s}_{=R^{(1a-1)}(T,B)}
+ \underbrace{\sum_{t=1}^{T} \sum_{s\in\mathcal{S}} \mathbb{I}\{H(t)=s\} \mathbb{P}[\hat{q}_{s,k_s^*}(t)<\theta]\bar{\Delta}_s}_{=R^{(1a-2)}(T,B)}. \nonumber
\end{IEEEeqnarray}

We first bound $R^{(1a-1)}(T,B)$.
In the following lemma, we show that the safe dose for each subgroup (i.e., the toxicities of the dose levels are below the MTD threshold) are included in the set of the candidate doses with high probability.
\begin{lemma}
For each subgroup $s$, $\mathbb{P}[p_{s,k}(\hat{a}_s(t)+\alpha_s(t))>\zeta]\leq \delta_s$, for any $p_{s,k}(a_s^*)\leq\zeta$.
\end{lemma}
\begin{proof}
We have
\begin{IEEEeqnarray}{ll}
\mathbb{P}[\hat{a}_s(t)+\alpha_s(t) < a_s^*] 
~&=\mathbb{P}[a_s^*-\hat{a}_s(t)>\alpha_s(t)] \nonumber \\
&\leq \sum_{k\in\mathcal{K}}\mathbb{P}\left[ |\hat{p}_{s,k}(t)-p_{s,k}(a_s^*)|>\left(\frac{\alpha_s(t)N_{s}(t)}{N_{s,k}(t)\bar{C}_{s,1}K}\right)^{\gamma_{s,1}}\right] \nonumber \\
&\leq \sum_{k\in\mathcal{K}}2\exp\left(-2N_{s,k}(t)\left(\frac{\alpha_s(t)N_{s}(t)}{N_{s,k}(t)\bar{C}_{s,1}K}\right)^{2\gamma_{s,1}}\right) \label{eqn:C1_1} \\
&\leq 2K\exp\left(-\left(\frac{\alpha_s(t)}{\bar{C}_{s,1}K}\right)^{2\gamma_{s,1}}N_{s}(t)\right)=\delta_s \nonumber
\end{IEEEeqnarray}
The inequality in \eqref{eqn:C1_1} follows from the Hoeffding's inequality.
\end{proof}
From this lemma, the probability that the event $\{p_{s,k_s^*}(\hat{a}_s(t)-\alpha_s(t))>\zeta\}$ occurs is bounded by $\delta_s$ since the set of the candidate doses for subgroup $s$ is constructed by $\{k\in\mathcal{K}:p_{s,k}(\hat{a}_s(t)+\alpha_s(t))\leq \zeta\}$ in C3T-Budget-E.
Then, the regret $R^{(1a-1)}$ can be simply bounded as
\begin{IEEEeqnarray}{ll}
R^{(1a-1)}(T,B)~&\leq \sum_{s\in\mathcal{S}}N_s(T)\delta_s\bar{\Delta}_s \nonumber \\
&\leq T\bar{\delta}\bar{\Delta} \nonumber,
\end{IEEEeqnarray}
where $\bar{\delta}=\max_{s\in\mathcal{S}}\delta_s$ and $\bar{\Delta}=\max_{s\in\mathcal{S}}\bar{\Delta}_s$.

We bound the regret $R^{(1a-2)}(T,B)$.
For the minimum efficacy threshold, we have the following lemma.
\begin{lemma}
Let For each subgroup $s$, $\mathbb{P}[\hat{q}_{s,k_s^*}(t)<\theta]\leq N_s(t)^{-2c}$.
\end{lemma}
\begin{proof}
We have
\begin{IEEEeqnarray}{ll}
\mathbb{P}[\hat{q}_{s,k_s^*}(t)<\theta] 
~&\leq \mathbb{P}\left[\bar{q}_{s,k_s^*}(t) < q_{s,k_s^*}-\sqrt{\frac{c\log N_s(t)}{N_{s,k_s^*}(t)}}\right] \nonumber \\
&\leq N_s(t)^{-2c} \nonumber
\end{IEEEeqnarray}
The first inequality follows from the fact $q_{s,k_s^*}\geq\theta$ and the second inequality follows from the Chernoff-Hoeffding inequality.
\end{proof}
Then, for $c\geq \frac{1}{2}$, the regret $R^{(1a-2)}(T,B)$ is bounded as
\begin{IEEEeqnarray}{ll}
\sum_{t=1}^{T} \sum_{s\in\mathcal{S}} \mathbb{I}\{H(t)=s\} \mathbb{P}[\hat{q}_{s,k_s^*}(t)<\theta]\bar{\Delta}_s ~
&\leq \sum_{t=1}^{T} \sum_{s\in\mathcal{S}} \mathbb{I}\{H(t)=s\} N_s(t)^{-2c}\bar{\Delta}_s \nonumber \\
&\leq S\sum_{t=1}^{T} t^{-2c} \bar{\Delta} \nonumber \\
&\leq S\log(T)\bar{\Delta}
\end{IEEEeqnarray}
In summary, we have the bound of $R^{(1a)}(T,B)$ as following.
\[
R^{(1a)}(T,B) \leq (T\bar{\delta}+S\log T)\bar{\Delta}
\]

\textbf{Bound of} $R^{(1b)}(T,B)$:We now bound the regret $R^{(1b)}$.
In this case, the optimal dose is included in the set of the candidate doses.
Then, the error occurs when the suboptimal doses are chosen in the set due to the inaccurate parameter estimation $\hat{a}_s$ and the inaccurate efficacy estimation.
To bound this, in the following lemma, we show that the unsafe dose for each subgroup (i.e., the toxicities of the dose levels exceed the MTD threshold) are excluded from the set of the candidate doses with high probability.
\begin{lemma}
Let $\Gamma_s=\min_{k\in\mathcal{K}}|a_s^*-p_{s,k}^{-1}(\zeta)|$ and
\[
\alpha_s(t)=\bar{C}_{s,1}K\left(\frac{\log\frac{2K}{\delta_s}}{2N_s(t)}\right)^{\frac{\bar{\gamma}_{s,1}}{2}}.
\]
For each subgroup $s$, if $N_s(t)\geq t_s^{(1b)}= \frac{1}{2}\left(\frac{\bar{C}_{s,1}K}{\Gamma_s - \epsilon}\right)^{2\gamma_{s,1}}\log{\frac{2K}{\delta_s}}$, then we have
$
\mathbb{P}[p_{s,k}(\hat{a}_s(t)+\alpha_s(t))\leq \zeta]\leq \exp(-2N_s(t)\epsilon^2),
$
for any $p_{s,k}(a_s^*)>\zeta$.
\end{lemma}
\begin{proof}
From the Hoeffding's inequality, we have
\[
\alpha_s(t)\leq p_{s,k}^{-1}(\zeta)-a_s^*-\epsilon=\Delta_{s,k}-\epsilon,
\]
where $\Delta_{s,k}=|a_s^*-p_{s,k}^{-1}(\zeta)|$.
With the definition of $\alpha_s(t)$, we can conclude the lemma.
\end{proof}
Let $N_s^{-1}(\tau)=\min_{t}\{t:N_s(t)=\tau\}$ which represents the round index at which the $\tau$-th patient of subgroup $s$ arrives.
Then, we can bound the regret $R^{(1b)}$ as
\begin{IEEEeqnarray}{lll}
R^{(1b)}~&\leq ~& \sum_{s\in\mathcal{S}} \left[ t_s^{(1b)}+ (K-U_s)\sum_{t=1}^{N_s(T)}\exp(-2t\epsilon^2) + \sum_{t=t_{s}+1}^{N_s(T)}\sum_{k:p_{s,k}(a_s^*)\leq\zeta}\mathbb{I}\{I(N_s^{-1}(t))=k\}\right] \nonumber \\
&\leq & \sum_{s\in\mathcal{S}} \left[ t_s^{(1b)}+ \frac{K-U_s}{2\epsilon^2}+\sum_{k:p_{s,k}(a_s^*)\leq\zeta}\frac{c\log T}{\Delta_{s,k}^{(s)}}\right] \nonumber \\
&\leq & \bar{t}^{(1b)}+ \frac{K-\underline{U}}{2\epsilon^2}+\sum_{s\in\mathcal{S}} \sum_{k:p_{s,k}(a_s^*)\leq\zeta}\frac{c\log T}{\Delta_{s,k}^{(s)}}, \nonumber
\end{IEEEeqnarray}
where $\bar{t}^{(1b)}=\max_{s\in\mathcal{S}}t_s^{(1b)}$ and $\underline{U}=\min_{s\in\mathcal{S}}U_s$.

\textbf{Bound of} $R^{(1)}(T,B)$:
Finally, from the bounds of $R^{(1a)}$ and $R^{(1b)}$, we have the regret bound of $R^{(1)}$ as following:
\begin{equation}
\label{eqn:regret_1}
R^{(1)}(T,B)\leq (T\bar{\delta}+S\log T)\bar{\Delta} +\sum_{s\in\mathcal{S}} \sum_{k:p_{s,k}(a_s^*)\leq\zeta}\frac{c\log T}{\Delta_{s,k}^{(s)}} +O(1).
\end{equation}

\subsection{Bound of $R^{(2)}$}
We now bound the second part of the regret, $R^{(2)}$.
Recall that the regret $R$ is decomposed into two parts: the regret due to taking suboptimal doses $R^{(1)}$ and the regret due to ordering errors in subgroups $R^{(2)}$.
Thus, in here, we do not have to consider the suboptimal doses and consider the ordering errors only.

In \cite{wu2015algorithms}, the regret due to the ordering error is analyzed.
Compared with the case that is analyzed, we additionally consider the safety constraint.
However, we can follow the analysis on the regret due to the ordering error in \cite{wu2015algorithms} for the bound of $R^{(2)}$ since the safety constraint reduces the ordering errors by excluding the unsafe doses which is not the optimal doses.
Thus, we can provide the regret bound of $R^{(2)}$ by using the analysis.

Before providing the regret bound, we define some boundary cases according to $\rho$ and $\eta_s$'s since the bound depends on them.
We first define a non-boundary case for a given fixed $\rho\in(0,1)$ as a case in which $\rho\neq\eta_s$ for any $s\in\mathcal{S}$,
and define a boundary case for a given fixed $\rho\in(0,1)$ as a case in which $\rho=\eta_s$ for some $s\in\mathcal{S}$.
Then, by applying the analysis on our algorithm, we have the regret bounds on the following lemma.
\begin{lemma}
\label{lemma:regret_2}
\citep{wu2015algorithms} Given a fixed $\rho\in(0,1)$, the regret $R^{(2)}(T,B)$ is bounded as follows:

(1) For the non-boundary case,
\[
R^{(2)}(T,B)\leq [\bar{q}^*+v(\rho)]M^{(2)}_{nb}\log T + O(1)
\]
(2) For the boundary case,
\[
R^{(2)}(T,B)\leq q_1^*\sqrt{\rho(1-\rho)T}+M^{(2)}_b\log T + O(1)
\]

where $\bar{q}^*=\sum_{s\in\mathcal{S}}\pi_s q_{s}^*$,
\[
M^{(2)}_{nb}=\sum_{s=1}^{\tilde{s}(\rho)}\sum_{k\in\mathcal{K}}\frac{27}{2g^{nb}_{\tilde{s}(\rho)+1}\left[\Delta^{(s)}_{\tilde{s}(\rho)+1,k}\right]^2} +\sum_{s=\tilde{s}+2}^{S}\sum_{k\in\mathcal{K}}\frac{27}{2g^{nb}_{s}\left[\Delta^{(\tilde{s}+1)}_{s,k}\right]^2}
+2SK,
\]
\[
M^{(2)}_{b}=\sum_{s=1}^{\tilde{s}(\rho)-1}\sum_{k\in\mathcal{K}}\frac{27}{2g^b_{\tilde{s}(\rho)}\left[\Delta^{(s)}_{\tilde{s}(\rho),k}\right]^2} +\sum_{s=\tilde{s}+1}^{S}\sum_{k\in\mathcal{K}}\frac{27}{2g^b_{s}\left[\Delta^{(\tilde{s})}_{s,k}\right]^2}
+2SK,
\]
\[
g^{nb}_s=\min\left\lbrace \pi_s,\frac{1}{2}(\rho-\eta_{\tilde{s}(\rho)}),\frac{1}{2}(\eta_{\tilde{s}(\rho)+1}-\rho)\right\rbrace,
\]
\[
\textrm{ and } g^b_s=\min\left\lbrace\pi_s,\frac{1}{2}(\rho-\eta_{\tilde{s}(\rho)-1}),\frac{1}{2}(\eta_{\tilde{s}(\rho)+1}-\rho)\right\rbrace.
\]
\end{lemma}

\subsection{Regret Bound of $R(T,B)$}
Form Lemma \ref{lemma:regret_2}, we can see that the boundary case has a worse bound $O(\sqrt{T}\log T)$ than the non-boundary case $O(\log T)$.
Hence, with \eqref{eqn:regret_1} and Lemma \ref{lemma:regret_2}, we have the worst-case regret bound of C3T-Budget-E in the theorem with
\[
M=\sum_{s\in\mathcal{S}} \sum_{k:p_{s,k}(a_s^*)\leq\zeta}\frac{c}{\Delta_{s,k}^{(s)}}+\sum_{s=1}^{\tilde{s}(\rho)-1}\sum_{k\in\mathcal{K}}\frac{27}{2g^b_{\tilde{s}(\rho)}\left[\Delta^{(s)}_{\tilde{s}(\rho),k}\right]^2} +\sum_{s=\tilde{s}+1}^{S}\sum_{k\in\mathcal{K}}\frac{27}{2g^b_{s}\left[\Delta^{(\tilde{s})}_{s,k}\right]^2}
+2S(K+\bar{\Delta}),
\]
\[
\textrm{ and } g^b_s=\min\left\lbrace\pi_s,\frac{1}{2}(\rho-\eta_{\tilde{s}(\rho)-1}),\frac{1}{2}(\eta_{\tilde{s}(\rho)+1}-\rho)\right\rbrace.
\]
\end{proof}

\section{DESCRIPTION OF C3T-Budget-E}
For C3T-Budget-E, we consider the following formulation:
\begin{IEEEeqnarray}{cl}
\maximize ~& E_\Pi(T,B) \nonumber \\
\subjecto ~&  \mathbb{P}\left[S_{\Pi,s}(T,B)\leq\zeta\right] \geq 1-\delta_s,~\forall s\in\mathcal{S} \nonumber  \\
& \textstyle\sum_{t=1}^{T} Z_t\leq B. \nonumber
\end{IEEEeqnarray}
where we have simply substituted the objective function $D_\Pi(T,B)$ in the limited-budget C3T problem (See (2) in our main paper) with $E_\Pi(T,B)$.
With this formulation, the agent tries to achieve high efficacies (rather than low dose recommendation error), which results in focusing on subgroups with high efficacies.
We now provide a detailed description of C3T-Budget-E to solve the above problem.
\begin{algorithm}[ht]
\caption{C3T-Budget-E}
\label{alg:algorithm2}
\begin{algorithmic}[1]
\State \textbf{Input:} Time-horizon $T$, budget $B$, and subgroup arrival distributions $\pi_s$'s
\State \textbf{Initialize:} $\tau=T$, $b=B$, $t=1$
\While{$t\leq T$}
\State $\hat{a}_s(t)\leftarrow \frac{\sum_{k=1}^{K} \hat{a}_{s,k}(t-1)N_{s,k}(t-1)}{N_{s}(t-1)},\forall s\in\mathcal{S}$
\State $\mathcal{K}_{s}(t)=\{k\in\mathcal{K}:p_{s,k}(\hat{a}_s(t)+\alpha_s(t))\leq\zeta\},\forall s\in\mathcal{S}$
\If{$b > 0$}
\If{$N_{H(t)}(t) \leq K$}
\State{Sample each dose once $I(t)=N_{H(t)}(t)$}
\Else
\State $k_s^*(t)\leftarrow\argmax_{k\in\mathcal{K}_{s}} \hat{q}_{s,k}(t),\forall s\in\mathcal{S}$
\State $\hat{q}_s^*(t)\leftarrow\max_{k\in\mathcal{K}_s}\hat{q}_{s,k}(t),\forall s\in\mathcal{S}$
\parState{Obtain $\hat{\bpsi}\left(b/\tau\right)$'s by solving the LP problem (See (3) in our main paper) with ordered $\hat{q}_s^*(t)$'s}
\parState{Allocate dose $
I(t)\!=\!\left\lbrace\begin{array}{ll}
\!\!k_{H(t)}^*(t),& \!\!\!\textrm{with probability }\hat{\psi}_{H(t)}(b/\tau), \\
\!\!0,&  \!\!\!\textrm{otherwise}.
\end{array}\right.
$}
\EndIf
\EndIf
\State Observe the efficacy $X_t$ and toxicity $Y_t$
\State Update $\tau$, $b$, $N_s(t)$, $N_{s,k}(t)$, $\bar{q}_{s,k}(t)$, $\bar{p}_{s,k}(t)$
\parState{$\hat{a}_{s,k}(t)\leftarrow\argmin_a|p_{s,I(t)}(a)-\bar{p}_{s,I(t)}(t)|,\forall s\in\mathcal{S},\forall k\in\mathcal{K}$}
\State $t\leftarrow t+1$
\EndWhile
\end{algorithmic}
\end{algorithm}

\section{DESCRIPTION OF BASELINE ALGORITHMS}
To evaluate the performance of C3T-Budget and C3T-Budget-R, we implement the following baseline algorithms:
\begin{itemize}
\item \textbf{Contextual UCB (C-UCB)} \citep{auer2002finite,varatharajah2018contextual}:
C-UCB is an extended version of a traditional UCB algorithm in \cite{auer2002finite} for a contextual bandits.
We implement it by running $S$ instances of the tradition UCB algorithm for each subgroup as introduced in \cite{zhou2015survey}.
In the algorithm, the safety and budget constraints are not considered.
The algorithm updates the empirical expected efficacy of each dose for each subgroup and its confidence bound.
It also updates the empirical toxicities, but they are used only for the recommendation.
In each round, the algorithm chooses the dose having the highest UCB index for the subgroup arrived in the round.
At the end of trial, it recommends a dose for each subgroup as: $\bar{d}_{s}=\argmax_{k:\hat{p}_{s,k}\leq\zeta, \bar{q}_{s,k}(t)\geq \theta} \bar{q}_{s,k}$,
where $\hat{p}_{s,k}$ and $\hat{q}_{s,k}$ are the empirical expected toxicity and efficacy of dose $k$ for subgroup $s$.

\item \textbf{Contextual KL-UCB (C-KL-UCB)} \citep{garivier2011kl,varatharajah2018contextual}:
C-KL-UCB is an extended version of a KL-UCB algorithm in \cite{garivier2011kl} for a contextual bandits.
Similar to C-UCB, we implement it by using $S$ instances of KL-UCB.
The algorithm is same with C-UCB except for using the KL-UCB index instead of the UCB index, which is given by
\[
\hat{q}_{s,k}(t)=\sup\{q\geq \bar{q}_{s,k}(t) : N_{s,k}(t-1)I(\bar q_{s,k}),q) \leq \log N_s(t) + \log\log N_s(t)\}, \nonumber
\]
where $I(p,q)=p\log\frac{p}{q}+(1-p)\log\frac{1-p}{1-q}$ is the Kullback-Leibler divergence.

\item \textbf{Contextual independent Thompson sampling (C-Indep-TS)} \citep{aziz2019multi}:
C-Indep-TS is an extended version of an Indep-TS algorithm in \cite{aziz2019multi,thompson1933likelihood} for a contextual bandits.
Similar to other extended algorithms, we implement it by using $S$ instances of Indep-TS.
In Indep-TS, a Bayesian approach is used to estimate the efficacy and toxicity as follows:
\[
\hat{q}_{s,k}(t)\sim \textrm{Beta}(\alpha^q_{s,k}(t),\beta^q_{s,k}(t)) \textrm{ and } \hat{p}_{s,k}(t)\sim \textrm{Beta}(\alpha^p_{s,k}(t),\beta^p_{s,k}(t)),
\]
where
$\alpha^q_{s,k}(t)=X_{s,k}(t)+1$,
$\beta_{s,k}^q(t)=N_{s,k}(t)-X_{s,k}(t)+1$,
$X_{s,k}(t)=\sum_{\tau=1}^{t}\mathbb{I}\{H(\tau)=s,I(\tau)=k\}X(\tau)$,
$\alpha^p_{s,k}(t)=Y_{s,k}(t)+1$,
$\beta_{s,k}^p(t)=N_{s,k}(t)-Y_{s,k}(t)+1$, and
$Y_{s,k}(t)=\sum_{\tau=1}^{t}\mathbb{I}\{H(\tau)=s,I(\tau)=k\}Y(\tau)$.
In each round $t$, the efficacy and toxicity for subgroup $H(t)$ is realized based on the posterior distribution as above equation, and then, the dose $k$ that has the maximum realized efficacy $\hat{q}_{H(t),k}$ is chosen.
At the end of the trial, it recommends a dose for each subgroup as: $\bar{d}_s=\argmax_{k:\hat{p}_{s,k}(t)\leq \zeta, \hat{q}_{s,k}(t)\geq \theta}\hat{q}_{s,k}(t)$.

\item \textbf{Contextual 3+3 (C-3+3)} \citep{storer1989design}:
C-3+3 is an extended version of a 3+3 clinical trial design in \cite{storer1989design} for a contextual model.
Similar to other extended algorithms, we implement it by using $S$ instances of 3+3 design.
In 3+3 design, for each subgroup, the lowest dose is treated to 3 patients.
Then, it observes the toxicity of the patients.
If the agent observes the toxicity from none of patients, then the next dose is treated to another 3 patients.
If the agent observes only one toxicity,  the same dose is treated to additional 3 patients.
If the agent still observes only one toxicity among the 6 patients, then the next dose is treated to another 3 patients.
Otherwise, the trial is stopped and the dose treated before stopping is recommended.
If the instance of 3+3 design for a subgroup is stopped once, then the patients in the subgroup are skipped.

\end{itemize}

\end{document}